%% file: main.tex
\documentclass{article}
\usepackage{iclr2027_conference,times}
\usepackage[T1]{fontenc}  % real glyphs for _ in \texttt{} code names (OT1 renders them as rules)
\input{math_commands.tex}

\input{generated/numbers.tex}

\input{generated/prefix_numbers.tex}
\input{generated/history_numbers.tex}

\input{generated/history_extensions_numbers.tex}
\input{generated/relative_error_numbers.tex}

\input{generated/review_numbers.tex}

\input{generated/review_ablation_numbers.tex}
\usepackage{hyperref}
\hypersetup{
  hidelinks,
  pdftitle={Shared Autoregressive Context Can Distort Relationships in Synthetic Data},
  pdfauthor={Thomas S. Robinson}
}
\usepackage{url}
\usepackage{amsmath,amssymb,amsthm}
\usepackage{booktabs}
\usepackage{graphicx}
\usepackage{multirow}
\usepackage{xcolor}
\usepackage{listings}

\lstdefinestyle{prompt}{%
  basicstyle=\ttfamily\scriptsize,
  breaklines=true,
  breakindent=0pt,
  breakautoindent=false,
  columns=fullflexible,
  keepspaces=true,
  frame=single,
  framesep=4pt,
  xleftmargin=4pt,
  xrightmargin=2pt,
  aboveskip=8pt,
  belowskip=8pt,
  literate={—}{{\normalfont\textemdash}}1,
}

\theoremstyle{plain}
\newtheorem{proposition}{Proposition}
\newtheorem{corollary}{Corollary}

\theoremstyle{definition}
\newtheorem{definition}{Definition}
\theoremstyle{remark}

\title{Shared Autoregressive Context Can Distort\\ Relationships in Synthetic Data}

\author{Thomas S. Robinson\\
\normalfont London School of Economics and Political Science\\
\normalfont\texttt{t.robinson7@lse.ac.uk}}

\iclrfinalcopy
\begin{document}

\maketitle
\lhead{Preprint}

\begin{abstract}
  \input{sections/abstract}
\end{abstract}

\input{sections/intro}
\input{sections/fig_captions}

\input{sections/design}

\input{sections/results}

\input{sections/correction}

\input{sections/related}

\input{sections/limitations}

\input{sections/statements}

\bibliography{references}
\bibliographystyle{iclr2027_conference}

\appendix
\input{sections/app_request_design}
\input{sections/app_history_channel}
\input{sections/app_downstream}
\input{sections/app_correction}
\input{sections/app_review_diagnostics}
\input{sections/app_temperature_prompt}
\input{sections/mechanism}
\input{sections/app_proofs}
\input{sections/app_audit_results}
\input{sections/app_current}
\input{sections/app_prompts}
\input{sections/app_campaign}
\input{sections/app_new_results}
\input{sections/app_marginal_controls}
\input{sections/app_followup_checks}
\input{sections/app_prefix_intervention}

\end{document}

%% file: math_commands.tex
\usepackage{amsmath,amsfonts,bm}

\def\eqref#1{equation~\ref{#1}}
\def\1{\bm{1}}

\DeclareMathAlphabet{\mathsfit}{\encodingdefault}{\sfdefault}{m}{sl}
\SetMathAlphabet{\mathsfit}{bold}{\encodingdefault}{\sfdefault}{bx}{n}

\newcommand{\E}{\mathbb{E}}

\newcommand{\Var}{\mathrm{Var}}

\newcommand{\Cov}{\mathrm{Cov}}

%% file: generated/numbers.tex
\newcommand{\InflatedPercent}{98.2}
\newcommand{\SignErrorPercent}{19.1}
\newcommand{\NoYearAgeRho}{0.730}

%% file: generated/prefix_numbers.tex
\newcommand{\PrefixDelta}{+0.503}
\newcommand{\PrefixDonorCentered}{+0.582}
\newcommand{\PrefixIntervalLow}{0.429}
\newcommand{\PrefixIntervalHigh}{0.641}
\newcommand{\PrefixMatchedTriplets}{600}
\newcommand{\PrefixBootstrapValid}{5{,}000}
\newcommand{\PrefixValidResponses}{1{,}800}
\newcommand{\PrefixTruncations}{0}
\newcommand{\PrefixRepZero}{+0.512}
\newcommand{\PrefixRepOne}{+0.489}
\newcommand{\PrefixRepTwo}{+0.512}
\newcommand{\PrefixSourceNatural}{0.487}
\newcommand{\PrefixSourcePermuted}{0.047}
\newcommand{\PrefixSourceAligned}{0.966}

%% file: generated/history_numbers.tex
\newcommand{\HistoryQwenHealthPositive}{0.963}
\newcommand{\HistoryQwenHealthNegative}{-0.670}
\newcommand{\HistoryQwenSurveyPositive}{0.985}
\newcommand{\HistoryQwenSurveyNegative}{-0.647}
\newcommand{\HistoryLlamaHealthPositive}{0.837}
\newcommand{\HistoryLlamaHealthNegative}{-0.329}
\newcommand{\HistoryLlamaSurveyPositive}{0.725}
\newcommand{\HistoryLlamaSurveyNegative}{0.041}

%% file: generated/relative_error_numbers.tex
\newcommand{\RelDownstreamACSLlamaLearned}{42}
\newcommand{\RelDownstreamACSLlamaPreserved}{53}
\newcommand{\RelDownstreamACSQwenLearned}{293}
\newcommand{\RelDownstreamACSQwenPreserved}{136}
\newcommand{\RelDownstreamESSLlamaLearned}{31}

\newcommand{\RelDownstreamESSQwenLearned}{45}

\newcommand{\RelDownstreamNHANESLlamaLearned}{0.18}
\newcommand{\RelDownstreamNHANESLlamaPreserved}{17}
\newcommand{\RelDownstreamNHANESQwenLearned}{27}

\newcommand{\RelFrontierESSLlama}{122}
\newcommand{\RelFrontierESSQwen}{51}
\newcommand{\RelFrontierNHANESLlama}{54}
\newcommand{\RelFrontierNHANESQwen}{56}

\newcommand{\RelFrontierSeedsESSLlama}{116\text{--}131}
\newcommand{\RelFrontierSeedsESSQwen}{51\text{--}52}

\newcommand{\RelHistoryBMIFive}{95}
\newcommand{\RelHistoryBMITwo}{99}
\newcommand{\RelHistoryMarginalCost}{8\text{--}11}

\newcommand{\RelOriginalHistoryGain}{47}

\newcommand{\RelPrimaryLlama}{114\text{--}127}
\newcommand{\RelPrimaryQwen}{48\text{--}58}

%% file: generated/review_numbers.tex
\newcommand{\ReviewQwenOne}{0.183}
\newcommand{\ReviewQwenBatch}{0.278}
\newcommand{\ReviewQwenDelta}{0.095}
\newcommand{\ReviewQwenPairCount}{155--159}
\newcommand{\ReviewLlamaOne}{0.187}
\newcommand{\ReviewLlamaBatch}{0.415}
\newcommand{\ReviewLlamaDelta}{0.228}
\newcommand{\ReviewLlamaPairCount}{148--154}

%% file: sections/abstract.tex
Large language models can generate several records within one autoregressive completion, making earlier answers available as context for later records. This paper shows that such shared-completion batching can distort relationships among variables in the resulting synthetic data, using controlled tests on synthetic survey respondents. In a matched experiment on $2{,}000$ European Social Survey profiles, generating ten rather than one respondent per request increases mean absolute error in within-country correlations by $\RelPrimaryQwen\%$ for Qwen3.8-27B and $\RelPrimaryLlama\%$ for Llama-3.3-70B-Instruct across three seeds, holding profiles, examples, questions and decoding parameters fixed. The distortion primarily reflects exaggerated relationship strength, while retaining substantial agreement with the human ordering of correlations. Controlled interventions establish answer history as a causal channel: re-pairing the same preceding values, with profiles and marginal distributions fixed, changes correlations among subsequently generated responses. Hiding preceding answers reduces correlation error in the tested settings but worsens marginal accuracy. Exploratory corrections across social-attitude, health and economic data likewise show that lower correlation error can coexist with worse marginal distributions and regression estimates. Request construction is therefore part of the data-generating process, and synthetic-data validity must be evaluated against the analyses the generated data are intended to support.

%% file: sections/intro.tex
\section{Introduction}\label{sec:intro}

Generating several records in one autoregressive completion can change the statistical
relationships in the resulting synthetic data. A large language model (LLM) can answer
for ten people in one completion, or for the same ten profiles in ten separate completions.
In the shared completion, earlier answers become context for later ones and can influence
the population statistics estimated from those records. Here, \emph{shared-completion
batching}, or batching below, refers to
respondents sharing a completion, independently of how a serving system schedules requests.

For a generator intended to sample respondents independently conditional on their profiles,
a useful desideratum is \emph{request-partition invariance}: splitting the same profiles
across requests should not materially change the population quantities the sample is
intended to estimate. This criterion allows ordinary variation between random draws and
concerns the distribution of repeated samples; autoregressive generation does not guarantee it.

Synthetic survey respondents provide a controlled testbed for this criterion. Silicon
sampling generates answers for people described by demographic profiles
\citep{argyle2023out}, and applied studies sometimes request dozens of respondents together
\citep{jahn2026refielding,miklian2026stochastic}. Human reference samples support comparisons of
both individual-variable distributions and relationships among variables: matching income
and trust distributions alone does not establish whether income predicts trust within a country.

The main experiment holds European Social Survey (ESS) profiles, examples, questions and
decoding settings fixed while changing request size. Generating ten respondents rather
than one raises mean absolute error across $171$ within-country correlations by
$\ReviewQwenDelta$ for Qwen3.8-27B and $\ReviewLlamaDelta$ for Llama-3.3-70B-Instruct,
with relative increases of $\RelPrimaryQwen\%$ and $\RelPrimaryLlama\%$ across three
seeds (Figure~\ref{fig:request_design}). The main change is exaggerated relationship
strength; batched correlations still retain substantial agreement with the human ordering.

Controlled interventions then test the influence of answer history directly. In health and political-trust templates,
re-pairing exactly the same preceding values changes correlations among newly generated
respondents in both models. The supplied profiles and each variable's distribution stay
fixed; only the pairing changes. This establishes answer history as a causal channel
through which shared context can alter later relationships
(Figure~\ref{fig:history_channel}), while leaving the model's internal computation open.
Hiding earlier answers provides a complementary test against human data: correlation
error falls even when only one respondent precedes the target, although marginal accuracy worsens.

These experiments isolate request size and manipulate history, extending earlier evidence
of distorted synthetic relationships
\citep{bisbee2024synthetic,lukauskas2026plausible,ma2026omnivorousness}. Exploratory
corrections then examine whether improved correlation accuracy makes the data more useful.
Across social-attitude, health and economic data, gains can coexist with worse marginal
distributions and regression estimates. Request construction belongs in the generator
specification, and its evaluation must address the quantities an intended analysis requires.

%% file: sections/fig_captions.tex
% Matched request experiment; the descriptive panel appears in its appendix.
\begin{figure}[!t]
\centering
\includegraphics[width=\textwidth]{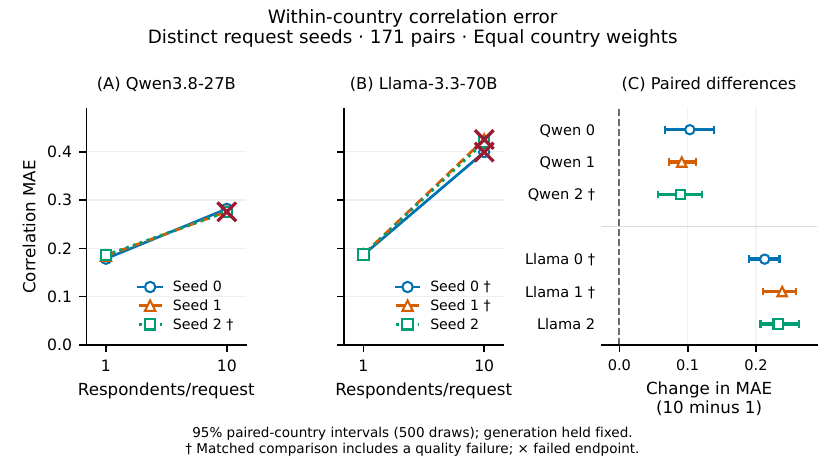}
\caption{\textbf{Sharing a completion increases correlation error in the matched ESS experiment.}
(A,B) Mean absolute error against human data across $171$ within-country correlations,
with equal country weights and the same accepted respondents in all four conditions.
Each line shows one seed. Error increases by $\RelPrimaryQwen\%$ for Qwen and
$\RelPrimaryLlama\%$ for Llama. (C) The increase from one to ten respondents, with
$95\%$ paired-country percentile bootstrap intervals ($500$ resamples). These intervals
condition on the generated answers and twenty selected countries; they omit uncertainty
over new seeds and other countries. Conditions below $99\%$ validity are marked as
quality failures. Variable pairs are not independent replications.}
\label{fig:request_design}
\end{figure}

%% file: sections/design.tex
\section{Data and evaluation}\label{sec:design}

\paragraph{Data.}
Three human surveys provide demographic profiles and reference answers, beginning with
$2{,}000$ European Social Survey (ESS) respondents in the main experiment: one hundred
in each of twenty countries. Their ten background characteristics condition the generation
of nineteen numeric responses covering trust, political attitudes, well-being, income,
age and interview year.

The 2017--2018 National Health and Nutrition Examination Survey (NHANES)
\citep{nchs2020nhanes} extends the study to U.S. adult health, while the 2022 American
Community Survey (ACS) \citep{census2023acs} supplies income and work data for employed
Massachusetts adults. Each contributes $800$ people, with one hundred in each of eight
groups defined by two sex categories and four age bands. The model receives each person's
recorded sex, age band, race/ethnicity, education and marital status;
Appendix~\ref{app:controlled_sources} lists the outcomes, age bands and eligibility rules.

\paragraph{Matched comparison.}
For each target person, the model receives a demographic profile alongside three examples
pairing other people's profiles with their real answers. The target's own survey answers
remain hidden, and a separate set of human respondents provides the evaluation reference.
The model generates answers for the same targets one at a time or ten at a time, keeping
profiles, examples, question order and non-seed generation settings fixed. Each request
receives a distinct random seed, with three runs per model at temperature $1$.
Appendices~\ref{app:request_design} and~\ref{app:controlled_sources} give the sampling and
request-construction details.

\paragraph{Invalid answers.}
The validation procedure rejects invalid or truncated requests in full, without repair
or regeneration, and comparisons use profiles accepted across the relevant conditions. A condition has a
\emph{quality failure} if fewer than $99\%$ of requests are valid or any are truncated.
The analysis retains such conditions, since matching accepted profiles cannot recover rejected answers.

\paragraph{Accuracy.}
The primary measure concerns relationships among variables across people within the same
country or demographic group. Subtracting each group's mean from each variable before calculating
correlations removes differences in average response levels between groups.
For a real or synthetic population $P$, with $c$ denoting the group,
\begin{equation}\label{eq:within}
 r_W^P=\mathrm{Corr}_P\big(Y_1-\E_P[Y_1\mid c],Y_2-\E_P[Y_2\mid c]\big).
\end{equation}
With groups receiving equal weight, correlation MAE ($A$) averages the absolute differences
between synthetic and human correlations. Further checks distinguish strength from pattern
by comparing the ordering of correlations, a zero-correlation baseline, and an \emph{oracle}
adjustment that chooses one multiplier in $[0,1]$ to minimize error against human answers.
The oracle diagnoses how much error simple weakening can remove; it uses evaluation
answers and is not an available correction (Appendix~\ref{app:review_metrics}).

Marginal Wasserstein error measures how far values must move, on average, to match each
human variable's distribution. Dividing by the variable's range ($M$) or human standard
deviation ($M_{\rm SD}$) puts errors on a common scale; the latter is emphasized for ACS,
where broad income ranges can hide errors (Equation~\ref{eq:marginal_error}). Fixed
regression tasks assess accuracy for particular analyses (Appendix~\ref{app:downstream}).

\paragraph{Uncertainty.}
Seed ranges describe variation across the three runs, while bootstrap intervals assess
sensitivity to resampling countries, profiles or answer orders, depending on the experiment.
Each draw keeps the compared conditions paired and uses the existing generated answers
and human reference; separate checks resample the human reference itself.
Appendix~\ref{app:review_uncertainty} and the study-specific appendices specify which
sources of uncertainty each interval covers.

%% file: sections/results.tex
\section{Shared-completion batching and answer history}\label{sec:distortion}

\subsection{Ten-person requests exaggerate relationship strength}\label{sec:control}

Requesting ten respondents rather than one increases ESS correlation error in every run
of both models under distinct per-request seeds (Figure~\ref{fig:request_design}). On
profiles accepted in all four combinations of request size and seed policy, mean MAE
rises from $\ReviewQwenOne$ to $\ReviewQwenBatch$ for Qwen and from $\ReviewLlamaOne$
to $\ReviewLlamaBatch$ for Llama. The second seed policy resets every request to the
same run seed and serves as a diagnostic; the primary contrast uses distinct seeds.
Across runs, the relative increases are $\RelPrimaryQwen\%$ and $\RelPrimaryLlama\%$,
with error increasing for \ReviewQwenPairCount{} of $171$ pairs in each Qwen run and
\ReviewLlamaPairCount{} in each Llama run. Although these overlapping pairs are not
independent tests, they show that the effect extends across much of the correlation
matrix. All six paired-country intervals for the increase in mean error exclude zero.
The effect also appears in checks that use all retained answers, remove age and year, or reset
every request to the same seed (Appendix~\ref{app:request_design}).

Much of this increase reflects stronger relationships rather than a loss of information
about their ordering. Batching raises mean $|r|$ from $0.360$ to $0.459$ for Qwen and
from $0.355$ to $0.596$ for Llama, compared with $0.193$ in the human reference. Yet
the batched correlations still track the human ordering, with Spearman agreement of
$0.864$ and $0.866$. After the oracle magnitude adjustment applies the best common
multiplier to each set of correlations, one- and ten-person MAE are both approximately $0.10$
(Table~\ref{tab:review_requests}). The main distortion is therefore an exaggeration of
relationship strength in data that retain substantial information about the human pattern.

The increase also survives checks on output validity and format. All $26{,}400$ requests
finish without truncation, although three original ten-person conditions fall below
$99\%$ validity: Qwen seed 2 ($97.5\%$) and Llama seeds 0 and 1 ($98\%$). The effect
holds in the three conditions that pass as well, with matching retaining at least eighty
profiles per country. Constraining generation to the required output format likewise
leaves the effect in ESS and NHANES: ESS mean MAE rises by $\RelFrontierESSQwen\%$
for Qwen and $\RelFrontierESSLlama\%$ for Llama, with corresponding NHANES increases
of $\RelFrontierNHANESQwen\%$ and $\RelFrontierNHANESLlama\%$. ACS provides weaker
evidence because all six ten-person conditions fail validity and Llama's effect varies
in direction across seeds (Table~\ref{tab:structured_frontier}).

The penalty also persists in both models at temperature $0.5$ and after removing the
instruction to ``Preserve realistic individual variation'' at temperature $1$. These
checks show that the effect does not require that wording or the original temperature.
At $1.5$, Llama remains valid and retains the penalty, but many Qwen ten-person requests
fail validation, preventing its joint ESS estimate and NHANES intervals. Higher-temperature
effects therefore depend on the model (Appendix~\ref{app:temperature_prompt}).

This consistent correlation penalty coexists with better individual-variable distributions.
On the correction study's matched evaluation cohorts, batched outputs have lower marginal
Wasserstein error than one-person outputs in all six settings; for ESS/Qwen,
range-normalized $M$ is $0.115$ compared with $0.127$ (Table~\ref{tab:marginal_controls}).
One-person generation is therefore not uniformly more accurate: the choice between
request sizes changes both how individual variables are distributed and how they vary together.

\subsection{Answer history is a causal channel}\label{sec:history_channel}

\paragraph{Changing pairings while keeping values fixed.}
The matched request-size experiment establishes a difference between generation procedures.
A controlled intervention on answer history then tests whether earlier answers can change
later relationships. The model receives ten demographic profiles and completed answers
for the first nine respondents, then generates only the tenth person's answers. The
comparison re-pairs the preceding values while keeping those values, their marginal
distributions and the profiles fixed (Figure~\ref{fig:history_channel}A).

In NHANES, each supplied respondent has a BMI and a value for daily sedentary minutes.
One version pairs higher BMI with more sedentary time, producing a positive relationship;
a second pairs higher BMI with less sedentary time, producing a negative relationship.
A third arranges the same values so that their correlation is zero. Because every version
contains exactly the same BMI and sedentary-time values, these changes alter their pairing
without changing either variable's distribution. The ESS experiment follows the same
procedure using trust in parliament and the judiciary. In both domains, the ten profiles
in each template are identical except for IDs, so rearranging answers also preserves
their agreement with the supplied demographics.

To separate pairing from a particular presentation order, eighteen forward and reverse
rotations place every value equally often in each position. Each order has three
repetitions with generation seeds paired across conditions, giving $12{,}096$
continuations across two models and two domains, including a reference condition that
hides the supplied answers. The correlations describe the newly generated tenth
respondents only; none of the nine supplied answers enters the calculation.

Changing history pairing from positive to negative moves target correlations from
$\HistoryQwenHealthPositive$ to $\HistoryQwenHealthNegative$ for Qwen/NHANES and
from $\HistoryQwenSurveyPositive$ to $\HistoryQwenSurveyNegative$ for Qwen/ESS.
Llama shows the same direction of change, from $\HistoryLlamaHealthPositive$ to
$\HistoryLlamaHealthNegative$ in NHANES and from $\HistoryLlamaSurveyPositive$ to
$\HistoryLlamaSurveyNegative$ in ESS (Figure~\ref{fig:history_channel}A). Every template
shows the same direction of change. After subtracting the mean response for each exact
answer order within each condition, the mean positive-minus-negative contrasts also
remain positive, so differences between orders' average responses cannot explain the
whole result (Appendix~\ref{app:history_channel}). Earlier answer pairings therefore
influence later relationships, although the experiment leaves open whether the model
extracts covariance, extends each variable's sequence separately, or uses another computation.

\begin{figure}[t]
\centering
\input{sections/history_schematic}
\par\medskip
\includegraphics[width=\textwidth]{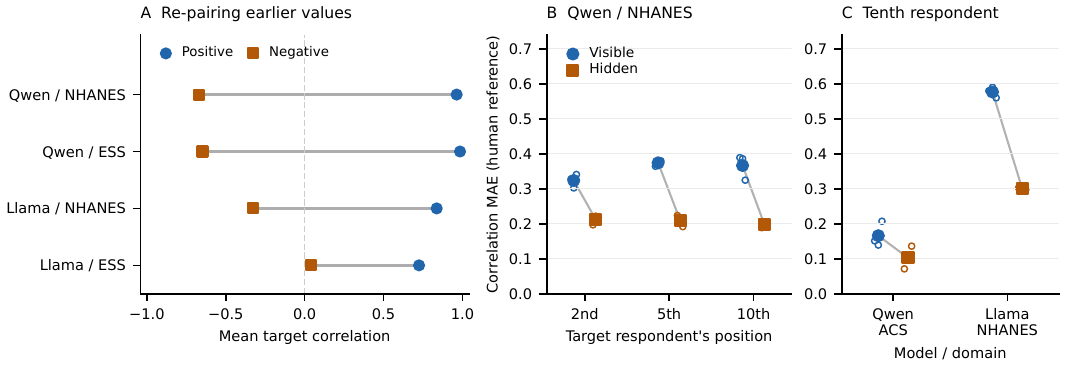}
\caption{\textbf{Manipulating earlier answers changes later relationships.}
The schematic shows the interventions; ranks illustrate pairing, with actual values fixed.
(A) Mean correlation among newly generated tenth respondents, averaged across prompt
templates, after positive or negative pairing of the same preceding values. This
panel measures generated relationships, without comparing them with human data.
Llama uses a separately validated NF4 runtime. (B) Qwen/NHANES correlation MAE for
profiles accepted in all six conditions. (C) Tenth-respondent extensions, matching
profiles across the two conditions. Filled markers average three repetitions;
hollow markers show each repetition. Hiding history skips the earlier answers'
state updates while retaining the target's logical token positions. Mean marginal
error rises in every comparison (Tables~\ref{tab:history_positions}--\ref{tab:history_extensions}).}
\label{fig:history_channel}
\end{figure}

\paragraph{Hiding previously generated answers.}
The pairing experiment establishes that history can change later relationships; a
complementary intervention asks whether access to that history affects accuracy against
human data. Here the model generates the earlier respondents' answers itself. For $400$
Qwen/NHANES profiles, the person whose answers enter the evaluation appears second,
fifth or tenth in a ten-profile prompt, giving that person one, four or nine preceding responses.

Each history then supports two generations of the target's answers using the same prompt
and seed. In the visible condition, the model processes the preceding answers as usual;
in the hidden condition, their tokens cannot update attention keys and values or Qwen's
recurrent and convolution states. Generation resumes at the target's original logical
token positions, preserving its position while removing the earlier answers' contribution
to the model state. The tenth-position experiment also extends to Llama/NHANES and Qwen/ACS.

On Qwen/NHANES profiles accepted in all six position--visibility conditions, hiding
history reduces mean MAE from $0.324$ to $0.213$ at position two, $0.374$ to $0.210$
at five, and $0.367$ to $0.197$ at ten. The corresponding reductions are $0.577$ to
$0.300$ for Llama/NHANES and $0.166$ to $0.103$ for Qwen/ACS, with lower raw MAE
in every repetition and all paired profile-bootstrap intervals excluding zero. These
gains come with worse individual-variable accuracy: mean marginal error rises in each
setting, including increases of $\RelHistoryMarginalCost\%$ in Qwen/NHANES
range-normalized $M$ across positions and from $0.348$ to $0.350$ in Qwen/ACS
$M_{\rm SD}$. Access to earlier answers thus improves marginal distributions while
worsening correlation accuracy in these comparisons.

The gain at position two also narrows the possible explanations, since a single preceding
respondent cannot define a cross-respondent correlation. History can therefore matter
without the model first extracting a covariance matrix from several preceding respondents.

\paragraph{Relationship strength and pattern.}
The magnitude diagnostics qualify what this reduction in raw error means. In Llama/NHANES,
hiding history slightly worsens both oracle-adjusted MAE ($0.099$ to $0.104$) and
agreement between the correlation vectors ($0.749$ to $0.723$), suggesting that the raw
gain largely reflects weaker correlations. Qwen/NHANES at position ten shows a broader
improvement: oracle MAE falls from $0.081$ to $0.065$ and pattern agreement rises from
$0.848$ to $0.894$ (Table~\ref{tab:review_history}). General weakening can therefore
explain much of the apparent benefit, but does not capture every setting's improvement.

Another possible explanation is that each request raises or lowers all its respondents'
answers together. Subtracting each request's mean, with one-person outputs first grouped
into the same ten-profile blocks, makes the ESS batching penalty larger and leaves a
larger leading correlation-matrix eigenvalue under batching. Differences in request
means alone therefore cannot explain the effect. Appendix~\ref{app:review_mechanism}
reports this check and exploratory regressions relating targets to their predecessors.

\paragraph{Supporting context controls.}
Hiding history changes state updates and leaves a gap in token positions. A matched
Qwen/NHANES follow-up closes that gap: MAE is $0.363$ with visible history, $0.192$
with hidden history at the original positions and $0.193$ at consecutive positions.
The benefit therefore survives removal of the gap. In the same runtime, showing only
the target profile gives lower MAE than showing ten profiles with the target first and
no preceding answers ($0.198$ versus $0.247$). These controls distinguish profile context
and answer access within a common continuation procedure; runtime and design differences
prevent assigning them separate shares of the original batching penalty
(Appendix~\ref{app:context_controls}).

%% file: sections/history_schematic.tex
% Native LaTeX schematic; the ranks illustrate the intervention, not new data.
\begingroup
\scriptsize
\begin{minipage}[t]{0.57\textwidth}
\centering
\textbf{Change only the pairing (A)}\par\smallskip
\setlength{\tabcolsep}{3pt}
\begin{tabular}{@{}lcc@{}}
 & Positive history & Negative history \\
$X$ & $1\,2\,3\,4\,5\,6\,7\,8\,9$ & $1\,2\,3\,4\,5\,6\,7\,8\,9$ \\
$Y$ & $1\,2\,3\,4\,5\,6\,7\,8\,9$ & $9\,8\,7\,6\,5\,4\,3\,2\,1$ \\
\end{tabular}\par\smallskip
Same profiles + either history $\longrightarrow$ generate person 10
\end{minipage}\hfill
\begin{minipage}[t]{0.41\textwidth}
\centering
\textbf{Change access to history (B,C)}\par\smallskip
Same profiles and generated earlier answers\par\smallskip
\fbox{Answers visible} $\longrightarrow$ target\par\smallskip
\fbox{Answers hidden} $\longrightarrow$ target\par\smallskip
Target token positions stay fixed
\end{minipage}
\endgroup

%% file: sections/correction.tex
\section{Dependence fidelity is not enough}\label{sec:correction}\label{sec:tradeoff}

Shared-completion batching and access to earlier answers can improve marginal accuracy
while worsening correlation accuracy. Reversing the correlation distortion therefore
need not improve the data for every use. Exploratory corrections examine this trade-off
by changing dependence after generation and assessing the resulting data. The comparisons
use standard operations, including rank-based dependence adjustment
\citep{iman1982distribution}, as diagnostic tools to test what lower correlation error
establishes about usefulness.

\paragraph{A diagnostic transformation.}
A full dependence map puts batched answers on a common normal-score scale, adjusts their
correlation matrix toward that of separate one-person fitting outputs, and converts the
scores back to answer units using the one-person fitting distributions. Fitting and
evaluation use disjoint whole requests, with matched profiles across request sizes;
fitting excludes human evaluation answers. Because the study reuses previously
examined outputs and human references, all correction results remain exploratory.
Appendix~\ref{app:correction} gives the map, fitting design and comparison methods.

\begin{figure}[t]
\centering
\includegraphics[width=0.90\textwidth]{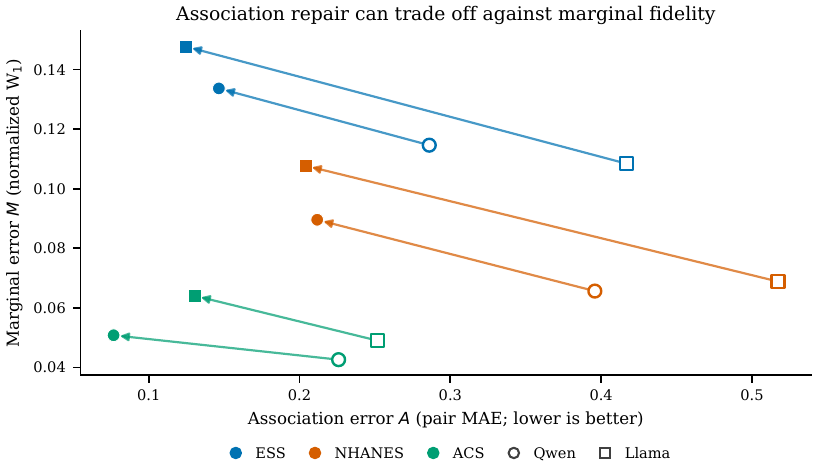}
\caption{\textbf{Lower correlation error can coexist with worse marginal accuracy.}
Open (unfilled) points show errors for the uncorrected ten-person batches; filled points
show errors after applying the full dependence map. Each arrow connects the results
for one domain and model, averaged across three seeds on matched evaluation profiles. Left and down
are better; every arrow moves left and up. $M$ uses range-normalized Wasserstein distance;
SD-normalized ACS results show the same trade-off. ACS quality failures remain included.
This result concerns the full map; exact marginal preservation gives a separate diagnostic.}
\label{fig:tradeoff}
\end{figure}

\paragraph{Improved correlations, worse marginals.}
The full map lowers mean correlation error in all six domain--model settings while
worsening marginal accuracy in each (Figure~\ref{fig:tradeoff}). The result also holds
on the more informative SD-normalized scale for ACS: marginal error rises from $0.376$
to $0.423$ for Qwen and from $0.418$ to $0.525$ for Llama. Much of the correlation gain
requires little pair-specific information. Replacing the full target matrix with one
multiplier for all off-diagonal correlations, fitted using the same one-person outputs,
retains 93--101\% of the MAE improvement (Table~\ref{tab:review_scalar}). In NHANES,
both full maps still have larger correlation errors than predicting zero for every pair.
Lower MAE can thus reward weakening without establishing recovery of the human pattern.

\paragraph{Preserving marginals does not settle usefulness.}
A further control keeps each group's original list of batched values for each variable
and reassigns those values to respondents in the order of their corrected scores.
Every marginal distribution and variance then stays fixed. Correlation error still falls
in $17/18$ full-map seed contrasts, with one worsening ACS/Llama seed
(Table~\ref{tab:correction_seed_contrasts}). Reordering can nevertheless weaken
relationships and change agreement with profiles, so exact marginal preservation does
not establish fidelity for a particular analysis.

Fixed regression tasks test that fidelity directly. ACS/Qwen coefficient MAE rises from
$0.032$ to $0.124$ under the full map and to $0.110$ under the scalar; other settings,
including ESS and Qwen/NHANES, improve (Table~\ref{tab:review_downstream}). The
marginal-preserving variants likewise do not consistently improve regressions
(Appendix~\ref{app:downstream}). A synthetic dataset's usefulness depends on which
features of its joint distribution the analysis requires. Neither reducing correlation
error nor preserving marginal distributions establishes that usefulness by itself.

%% file: sections/related.tex
\section{Related work}\label{sec:related}

Evaluations of LLM-generated synthetic data examine both agreement with human data and
sensitivity to the generation procedure. In survey simulation,
\citet{argyle2023out} introduced silicon sampling using separate prompts for individual
respondents. Subsequent work finds misalignment with demographic groups' opinions \citep{santurkar2023whose},
too little variation and shifted regression coefficients \citep{bisbee2024synthetic}, and
sensitivity to survey presentation \citep{dominguezolmedo2024questioning}. Distorted relationships
are also documented: \citet{lukauskas2026plausible} examine psychometric structure across
thirty-seven models, and \citet{ma2026omnivorousness} study relationships among cultural tastes.

Respondent batching appears in applied studies: \citet[][Web Appendix~A.1]{jahn2026refielding}
request fifty respondents per completion, and \citet[][Appendix]{miklian2026stochastic}
describe batches of thirty to fifty. In qualitative interview simulation,
\citet[][Section~9.2]{sanders2026affordances} request four people's answers together.
\citet{jahn2026refielding} also compare batch, cumulative-memory and seed-and-expand
generation for consistency across repeated surveys and agreement with human data, including
correlations. Building on these comparisons, the matched design in this paper isolates
shared-completion request size while keeping respondent profiles, examples and survey questions fixed.
The history experiments then intervene on the values available to later respondents,
re-pairing them while preserving their marginal distributions and demographic agreement.
These controls connect differences between generation procedures to the effects of
specific changes in request construction and answer history.

Grouping questions is a different design choice from grouping respondents. \citet{lam2026fast}
find advantages, in their setting, to asking one respondent's questions together rather than
in separate calls. In contrast, \citet{choi2026beyond} find better structural agreement from
per-item prompts in their setting, distinguishing agreement in relationships, individual-variable
distributions and individual responses as they evaluate prompting, rectification and
fine-tuning using small human samples. The experiments here address a different source
of variation: the survey questions stay fixed while the number of \emph{respondents}
sharing a request changes. The correction comparison likewise uses generated fitting
answers, with human data providing separate evaluations of correlations and marginal distributions.

Adjusting dependence through ranks while preserving marginals is established
\citep{iman1982distribution}. Standard transformations serve here as diagnostics:
replacing pair-specific targets with one overall adjustment tests how much improvement
requires detailed relationship information, while fixed marginals and regression tasks
test what that improvement means for the resulting data.

Relationships between group averages can differ from relationships among people within
groups \citep{robinson1950ecological,goodman1953ecological,king1997solution}. For example,
a relationship across countries need not hold among people in each country. This motivates
asking whether models transfer relationships from one level to the other, while leaving
that explanation to be tested.
Tabular generators such as CTGAN and TabDDPM are fitted to individual-level training data
\citep{xu2019modeling,kotelnikov2023tabddpm}, unlike prompting a pretrained model with profiles
and a few examples. Prediction-powered inference can correct particular estimators using real
observations \citep{angelopoulos2023prediction}, while leaving the validity of the rest of a generated dataset open.

%% file: sections/limitations.tex
\section{Limitations and implications}\label{sec:limitations}

The experiments show that request construction can distort relationships in two model
families, with consistent request-size evidence in ESS and NHANES. ACS extends the study
to economic data, but all six structured ten-person conditions fail validity and Llama's
effects differ in direction across seeds. The study uses one basic design for profiles
and examples, mainly at temperature $1$; temperature and wording checks broaden this
evidence without establishing invariance across models, prompts or request sizes.
Matching cannot recover rejected answers, including those lost in three original ESS
ten-person conditions. Because the analysis uses complete outcomes and equal group weights, its conclusions
describe selected reference populations rather than survey-weighted national populations.

The history interventions establish earlier answers as a causal channel while leaving
the model's internal computation open. Hiding history also
skips syntax and state updates in a way that may be unfamiliar to the model. Although
the Qwen/NHANES improvement persists after removal of the token-position gap, worse
marginal accuracy and mixed pattern results leave general weakening as a plausible
explanation for much of the lower correlation error. The matched follow-up examines
profile context and history in one runtime using forced continuations; differences in
runtime, precision, cohort and prompt order prevent attribution of separate shares of the
original batching penalty to these factors. Turning the interventions into reliable
generation methods requires further work.

The correction experiments and new diagnostics remain exploratory because they reuse
previously examined human references, even though human evaluation answers are excluded
from fitting. The scalar still requires one-person outputs, while exact marginal
preservation uses the evaluation batch itself and can weaken agreement with profiles.
Neither establishes a generally useful or cheaper correction. Uncertainty estimates also
have defined limits: bootstrap intervals cover only specified sources, and reference
resampling lacks survey-cluster identifiers. The descriptive audit and archival experiments
provide supporting context from different populations and designs
(Appendices~\ref{app:audit_results}--\ref{app:campaign}).

Request construction can change the statistical relationships attributed to a synthetic
population, so the number of records sharing a completion and their access to earlier
answers belong in the generator's specification and validation. Evaluation must then
address the marginal, dependence and regression quantities required by the intended
analysis, since improvement in one does not establish the validity of the others.

%% file: sections/statements.tex
\subsubsection*{Ethics statement}

The study analyzes existing, de-identified survey and public-use microdata. It collects
no new human-subject responses. Some source surveys require registration and restrict redistribution.
Training examples sent to model providers contain real survey rows, and generated records
include sampled conditioning profiles.

\subsubsection*{Reproducibility statement}

The protocols and hashes identify the code and inputs used in each study
(Appendices~\ref{app:request_design}--\ref{app:prefix_intervention}). Independent checks
verify generated tokens, matched respondents, weights, corrections, error measures and
paired bootstrap draws. Reports include invalid outputs and undefined intervals.
Regression coefficients agree with a separate weighted-least-squares implementation using
explicit group indicators (Appendix~\ref{app:downstream}). The supplement contains
data-preparation and experiment code, artificial-data tests, and aggregate results that
reproduce the figures and generated tables offline. A single CPU command runs the replay
and checks its outputs.
Recomputing respondent-level results or rerunning generation requires access to the source
microdata, model weights and compatible GPU runtimes. The included preparation scripts
construct the train/test split and study inputs; fresh runs require new provenance manifests.
The supplement excludes private profiles, fitted maps, real examples and raw requests.

\subsubsection*{AI use statement}

The evaluated models generated synthetic respondents as the experimental treatment. Their
identifiers and settings appear in the methods and appendix. Separately, AI coding assistants,
including OpenAI Codex, supported drafting and revision, literature discovery, methodological
feedback, proof checking and revision, code implementation and debugging, and numerical
analysis and interpretation. The author retains responsibility for all claims, citations,
proofs, code and results.

%% file: sections/app_request_design.tex
\section{Matched request-size experiment}\label{app:request_design}

\subsection{Sampling and study design}

The September 16 protocol draws on earlier experiments and was fixed before these
responses were generated. It specifies two models, request sizes one and ten, two seed policies and
three repetitions. The protocol was not externally preregistered. Both model runs are
complete and independently checked. The separate prompt-context extension is reported
on its own.

The eligible pool consists of ESS training rows complete on nineteen numeric targets
(excluding household size), using the split and registry in Appendix~\ref{app:current}. Countries are ranked by eligible
training count, with alphabetical ties: DE, FR, FI, AT, SE, BE, EE, NL, ES, GB, IE, IT, NO,
CZ, CH, BG, PL, HU, LT and SI. Sampling without replacement selects $100$ rows per country
(seed $20260916$). Their ten categorical fields condition generation; missing fields remain
\emph{not reported}. Their numeric outcomes are withheld from generation and saved only
for a secondary reference.

The independent test reference contains $19{,}896$ complete rows. Each country receives
weight $1/20$, so estimates describe complete cases with equal country weights rather
than survey-weighted populations. The profiles form forty sets of fifty,
each containing one country and sharing three fixed real examples (seed $20260917$), sampled from $79{,}993$
training rows complete on all displayed fields, excluding all $2{,}000$ targets. Examples,
profiles and item order remain fixed across conditions and models.

\subsection{Requests and runtime}

Each set of fifty profiles becomes fifty one-person or five ten-person requests.
The experiment runs under two seed policies. In the primary policy, every request
receives a distinct stable $63$-bit seed derived from its identity. In the diagnostic
policy, every request resets to the same seed for that run. This checks whether reusing
a decoder seed changes the request-size comparison. Seeds $0,1,2$ index complete
repetitions of the experiment under both policies. Each model
requests $24{,}000$ respondents through $13{,}200$ requests across the twelve conditions.

The prompt requests a flat JSON array with the supplied integer IDs and nineteen numeric
fields, permitting fractional values within inclusive registry bounds. A technical pilot showed that nested examples made the required flat output ambiguous.
The \texttt{flat-examples-v2} amendment changes only the formatting, pairing the same
example profiles with flat answer arrays and reserved negative IDs. Profiles, values,
scales, seeds and comparisons stay fixed. Both amended pilots passed validation before
production; the change was chosen for formatting validity rather than scientific results.
The supplied constructor defines this prompt, which differs from the September 6 panel template.

Checkpoints are Qwen3.8-27B \citep{qwen2026qwen38card}, revision
\texttt{1d4bf0f2ff6012fd82039f2fa52739d0dd7c60c0}, and Llama-3.3-70B-Instruct
\citep{meta2024llama33card}, revision \texttt{6f6073b423013f6a7d4d9f39144961bfbfbc386b}.
Both use two RTX 6000 Ada GPUs, tensor parallelism two, Python $3.12.13$,
vLLM $0.23.0$ \citep{kwon2023pagedattention}, PyTorch $2.11.0$ \citep{paszke2019pytorch}
and Transformers $5.9.0$ \citep{wolf2020transformers}. Qwen uses BF16 weights, automatic
KV-cache dtype, language-model-only mode and thinking disabled; Llama uses on-the-fly FP8
weights and FP8 KV cache. Cross-model magnitudes therefore do not isolate architecture.
Temperature and top-$p$ are $1$, top-$k$ is unrestricted and min-$p$ is zero. Output/context
limits are $8{,}192/16{,}384$ tokens; inference microbatches contain $32$ requests, not
$32$ respondents per completion. Generation stops at the closing array bracket and retains it.
Exact chat-template lengths are checked; output budgets are never silently reduced.

\subsection{Validity, estimands and uncertainty}

The generation pipeline saves each completed request atomically, including its raw text, IDs, prompt and input
hashes, settings, runtime, token counts and finish reason. Acceptance requires exactly the requested unique IDs and fields,
finite in-range JSON numbers and no duplicate keys. Any missing/extra row or field, invalid
number or token-limit termination rejects the whole request. Rejected requests are neither altered nor partially retained or retried. A resumed run
executes only requests with no completed record.

Pilots cover the first two fixed profile sets, seed zero and all four size/policy arms.
They are excluded from production despite reused profiles. The frozen threshold is $99\%$
respondent retention and zero truncations per arm. Failed production arms remain reported.

\begin{table}[htbp]
\centering
\caption{Production validity under whole-request acceptance: an invalid ten-person request
loses all ten respondents. Failed-quality arms remain reported.}
\label{tab:request_design_quality}
\scriptsize
\input{generated/request_design_quality}
\end{table}

The analysis scores both all accepted rows and the profiles accepted in all four conditions within
each model and repetition. Matching aligns the observed profiles, but the comparison
still depends on which answers passed validation. Every fixed country
must retain twenty rows and all $171$ pair correlations must be defined. Centered residuals
receive weights $1/(20n_c)$: this pools country covariances, not country correlations.
The primary contrast is size-ten minus size-one MAE under independent seeds; shared-reset
contrasts and the size-by-policy interaction are diagnostic. The interaction changes sign
across repetitions in both models.

Percentile intervals use $500$ paired country resamples (base seed $20260918$), retaining each
sampled country's full generated/reference records and multiplicity. They condition on realized
generation and selected countries, excluding generation and within-country reference uncertainty.
Three-seed ranges are descriptive; the $171$ overlapping pairs are never independent units.

\begin{table}[htbp]
\centering
\caption{Independent-seed request-size results on four-arm common IDs. MAE averages $171$
within-country correlation errors; intervals use the conditional paired-country bootstrap.
$\dagger$ denotes an arm below $99\%$ validity.}
\label{tab:request_design}
\small
\input{generated/request_design_main}
\end{table}

\begin{table}[htbp]
\centering
\caption{Matched-ID sensitivities: $\Delta$ is size-ten minus size-one MAE. Excluding age/year
leaves $136$ pairs. Intervals use the same conditional country bootstrap. A dagger flags any
failed-quality arm in four-arm matching, including the other seed policy.}
\label{tab:request_design_sensitivity}
\scriptsize
\input{generated/request_design_sensitivity}
\end{table}

\subsection{Secondary checks and verification}

Secondary checks exclude age and year, leaving $136$ pairs; examine the three original
police-trust pairs; and use the sampled profiles' observed human answers as another reference.
Further checks fit weighted, country-demeaned regressions of police trust on income, happiness,
democracy satisfaction and age. On matched Qwen IDs
under independent seeds, median democracy coefficients are $0.435/0.593$ for one/ten-person
requests versus $0.339$ in the held-out reference; happiness coefficients are $0.230/0.169$
versus $0.154$. These coefficients illustrate that batching can move different regression estimates in
different directions. They describe associations, not causal effects.

Post hoc, the $2{,}000$ sampled profiles' human outcomes have MAE $0.0177$ against the test
reference, or $0.0178$--$0.0185$ on model/repetition-specific matched IDs. This comparison describes the particular human samples and does not establish a noise
floor or an uncertainty interval. The outcomes were withheld from generation; examining
them afterward does not create a new holdout or decompose the synthetic error.

An independent row-weight implementation reproduces all $48$ endpoints, $8{,}208$ pair
correlations, $192$ regression coefficients and $300$ contrasts, using full country-dummy
weighted least squares rather than the production analyzer. A separate raw-record audit
reparses all $26{,}400$ completions, matches all $89$ invalid requests and reasons, and exactly
reproduces $47{,}713$ accepted-row exports. Country counts and quality flags agree. A reporting
correction propagates failed-quality flags into seed summaries without changing estimates;
original outputs remain archived. These checks do not validate bootstrap coverage, model
generalization or unobserved rejected answers.

\subsection{Matching only the primary independent-seed arms}\label{app:primary_support}

An exploratory sensitivity matches profiles only across the two independent-seed conditions.
This keeps failures in the shared-reset diagnostic from removing otherwise observed answers
from the main comparison. Countries,
variables and human reference remain fixed. Its absolute contrasts are close to those on
the four-arm support. At seeds $0,1,2$, absolute two-arm contrasts are
$0.1026,0.0913,0.0893$ for Qwen and $0.2098,0.2384,0.2326$ for Llama, versus
$0.1031,0.0913,0.0893$ and $0.2128,0.2384,0.2326$ on four-arm support.
Quality failures remain flagged. Independently weighted covariances verify both the sensitivity
and archived four-arm estimates using audited inputs. The check changes which accepted profiles enter the comparison. It neither changes generation
nor recovers rejected answers.

%% file: sections/app_history_channel.tex
\section{Changing and hiding earlier answers}\label{app:history_channel}

Two interventions test the influence of earlier answers. Changing their pairing tests whether
they influence relationships in later responses. Hiding naturally generated answers tests
whether access to them affects accuracy against human data. These are separate questions:
changing a generated relationship need not make it more accurate. The studies follow the
prefix-replay experiment (Appendix~\ref{app:prefix_intervention}). Each protocol,
source snapshot, input plan and analysis rule was fixed before scientific generation,
including for the extension to two models and two domains.

\subsection{Controlled pairing of fixed history values}

\paragraph{Stimuli and allocation.}
NHANES uses BMI/sedentary minutes and eight sex--age templates; ESS uses trust in
parliament/the judiciary and twenty country templates. Each template uses the most common categorical profile in the fitting partition,
with ties broken by a fixed hash. Prompts contain ten identical profiles except
IDs and three fixed training examples projected onto the two outcomes. ESS uses examples from
each country's first fitting set of fifty profiles; NHANES reuses the preceding frozen Qwen
study's exact templates, prompts, examples and paired seeds. Original manifests verify the inputs.

The model receives answers for the first nine respondents and completes the tenth.
For NHANES, the supplied answers draw from two fixed lists: BMI values
$20,22,\ldots,36$ and sedentary minutes $120,180,\ldots,600$. Pairing the lists in
the same rank order gives a positive correlation: respondents with higher BMI also have
more sedentary minutes. Reversing one list gives a negative correlation. A third pairing,
defined by the index permutation $(0,2,6,8,7,5,4,3,1)$, gives zero correlation.
ESS follows the same procedure using $1,2,\ldots,9$ for each trust variable.

The order in which the nine supplied respondents appear also varies. Eighteen
forward/reverse rotations place every rank twice in every history position, so no rank
is consistently nearest the target. Each pairing retains the same values for each variable
and the same number of tokens for the relevant tokenizer. Because the supplied respondents
have identical profiles, moving values between them also preserves agreement with their
attributes. It changes both the relationship between the variables and the sequence of
values that the model encounters.

Each order has three distinct seeds paired across conditions. A fourth, hidden-history condition
skips preceding-answer state updates while retaining their logical token-position gap.
The equal-length histories produce the same input when hidden, so one hidden-history
condition serves as their shared reference.
Each model generates $8\times18\times3\times4=1{,}728$ NHANES and
$20\times18\times3\times4=4{,}320$ ESS continuations. Only newly generated tenth answers are scored.

\paragraph{Endpoint and uncertainty.}
For each template, Pearson correlations use responses accepted under every condition,
requiring at least twenty of the $54$ planned order--repetition cases. The endpoint averages
the positive-minus-negative contrast equally across templates. This averages correlations
rather than covariances, unlike Equation~\ref{eq:within}, and does not use human answers
as an accuracy reference. All-retained, per-repetition, Spearman
and marginal-shift sensitivities remain in the aggregate supplement.

The $5{,}000$ fixed-seed bootstrap draws independently resample eighteen order blocks within
each template, carrying all repetitions and arms together. The intervals describe sensitivity to the mix of orders. They do not provide population
coverage because the balanced orders are not independent sampled contexts and the templates
are not a random demographic sample. Any undefined draw suppresses the interval.
Llama/NHANES has $1{,}393$ such draws; its zero-history-pairing arm also has one undefined
template correlation. Neither is replaced by zero or dropped.

\begin{table}[t]
\centering
\caption{Equal-template correlations in newly generated tenth responses. Hidden history skips
earlier-answer updates; $\Delta_{+,-}$ is positive minus negative pairing. Intervals describe
conditional order-mixture sensitivity. A dash is undefined; Llama/NHANES's interval is withheld.}
\label{tab:history_controlled}
\footnotesize
\setlength{\tabcolsep}{3pt}
\input{generated/history_controlled}
\end{table}

\begin{table}[t]
\centering
\caption{Controlled-history validity and matched support under one strict rule across arms.
Invalid answers remain archived; undefined bootstrap draws are counted, not discarded.}
\label{tab:history_validity}
\scriptsize
\setlength{\tabcolsep}{3pt}
\input{generated/history_validity}
\end{table}

\paragraph{Exploratory within-order check.}
An apparent pairing effect could arise because different answer orders shift the mean
of the generated responses. A check for this explanation holds the profile template,
answer order and pairing condition fixed, then subtracts the mean across retained
repetitions from each outcome.
Correlations calculated from these centered responses remain positive under positive
pairing and negative under negative pairing. The respective correlations are $0.933/-0.296$ (Qwen/NHANES), $0.978/-0.480$ (Qwen/ESS),
$0.649/-0.225$ (Llama/NHANES) and $0.634/-0.068$ (Llama/ESS).
Differences between order-specific means therefore explain only part of the contrast.
This exploratory check has three repetitions per order, requires at least two retained,
and supplies no formal interval. It leaves the original endpoint unchanged. Both this check
and the balanced history positions leave open whether the model copies covariance or
extends the sequence of each variable separately.

\subsection{Fresh natural histories at three positions}

\paragraph{Cohort and intervention.}
Qwen/NHANES uses all $400$ evaluation profiles, fifty per sex--age cell, and five outcomes:
poverty--income ratio, BMI, self-rated health, PHQ-9 and sedentary minutes. A fixed hash order selects nine distinct fitting donors from the same cell for each
target and repetition. Full prompts include
these donors, the disjoint target and its existing three examples; target outcomes never
select donors or construct prompts.

At position ten, the model first generates answers for the nine preceding respondents.
The actual target profile is already present in the prompt while it does so. The model
then generates the target's answer twice, starting each time from a reconstructed state.
In the visible condition, the model processes the earlier answers as usual. The hidden
condition skips those tokens, including their answer syntax, so they cannot update
attention keys and values or
convolution/recurrent states. The two continuations use the same full profile prompt,
forced target syntax, decoder seed and absolute logical target positions. Keeping those
positions fixed leaves a gap where the earlier answers would have been, so information
about their length remains available.

The extension moves the target to position two or five, preserving donor order, examples and
target seed. It generates new one- or four-row histories under the changed full prompt, rather
than truncating a nine-row history. Each position has three repetitions, $1{,}200$ planned
histories and $2{,}400$ visible/hidden continuations. For position ten, the analysis reuses the verified original outputs and rescores them on
profiles accepted in all six position--visibility conditions. At position two, one preceding answer cannot itself
define a cross-respondent covariance. Moving the target changes history length, prompt order and predecessor profiles together.
The comparison therefore cannot isolate the effect of history length alone.

\paragraph{Scoring and uncertainty.}
Matching profiles across all six position--visibility conditions leaves $44$--$50$ targets
per cell in each repetition. Invalid histories invalidate both associated targets; invalid targets remain archived,
without retries or clipping. Equal-cell pooled covariances give correlations for all ten pairs,
scored against the study's fixed human reference. The primary contrast averages hidden-minus-visible MAE across repetitions. A negative
value means lower error, without implying that the resulting data are accurate.

For $5{,}000$ paired bootstrap draws, target IDs are resampled from the fifty planned profiles
within each cell, carrying positions, arms and repetitions together with multiplicity; each
repetition's six-arm acceptance rule is then applied. The reference stays fixed. These
profile-block intervals condition on the observed cohort and intervention, not new model-training
runs, reference populations or generation seeds. All three contrasts have no undefined draws.
Aggregate reports also retain all-retained, cell, focal-pair, repetition and failure diagnostics.

\begin{table}[t]
\centering
\caption{Qwen/NHANES, six-arm common support: three-repetition means of correlation MAE $A$
and normalized marginal Wasserstein error $M$. Change is hidden minus visible; intervals
resample paired profile blocks. Valid-target counts precede six-arm matching; each position
plans $1{,}200$ histories.}
\label{tab:history_positions}
\scriptsize
\setlength{\tabcolsep}{2.5pt}
\input{generated/history_positions}
\end{table}

Valid histories number $1{,}199$, $1{,}195$ and $1{,}192$ at positions two, five and ten.
The original position-ten two-arm estimate falls from $0.363$ to $0.194$
($\RelOriginalHistoryGain\%$): contrast $-0.170$, interval $[-0.221,-0.125]$.
Table~\ref{tab:history_positions} instead gives the extension's six-arm comparison; neither
replaces the other's prespecified estimate.

\subsection{Accuracy extensions: Llama/NHANES and Qwen/ACS}\label{app:history_extensions}

Each tenth-respondent extension follows a separately fixed protocol and uses $400$ existing evaluation profiles
(fifty per sex--age group), fitting-partition donors and the existing human reference:
$1{,}200$ fresh histories and $2{,}400$ paired targets across three repetitions.
Llama/NHANES retains its five-outcome plan and seeds; Qwen/ACS scores all six pairs among
income, wages, weekly hours and weeks worked. Both interventions preserve the full prompt,
target syntax, seed and logical positions. These are reused cohorts, not new holdouts.

The endpoint averages hidden-minus-visible MAE across repetitions on two-arm common support.
Independent audits accept every history and target without truncation, so matching retains all
fifty profiles per group/repetition. The $5{,}000$ paired profile-block resamples carry repetitions
and arms together with the reference fixed; all draws are defined. These are conditional
intervals, not population or generation-seed uncertainty.

\begin{table}[t]
\centering
\caption{Tenth-respondent extensions: three-repetition mean correlation MAE $A$ and normalized
marginal Wasserstein error $M$. Reduction is relative to visible history; $\Delta A$ is hidden
minus visible. All $400$ profiles per repetition are retained; intervals use paired profile blocks.}
\label{tab:history_extensions}
\scriptsize
\setlength{\tabcolsep}{3pt}
\input{generated/history_extensions}
\end{table}

Although mean marginal error rises (Section~\ref{sec:history_channel}), ACS's second
repetition improves it by $0.09\%$. Prespecified ACS sensitivities
give correlation-error contrasts $-0.085$ without the income--wages pair and $-0.080$ after
transforming both monetary variables by $\operatorname{asinh}(x/10{,}000)$; each repetition
improves. Neither establishes that reducing correlation is generally beneficial.

\subsection{Matched singleton, profile-context and position controls}\label{app:context_controls}

The follow-up reuses the $400$ Qwen/NHANES target profiles, three repetitions, donor profiles,
examples, original native histories and target seeds in the same pinned FP16 Transformers
runtime. The model generates fresh answers under five conditions. The first presents the target
profile alone. The second puts the target first in a prompt containing all ten profiles,
so the model sees the other profiles but has no preceding answers. The third presents
the original visible answer history. The final two conditions both hide that history:
one retains the gap in token positions, while the other closes the gap and assigns
contiguous positions to the tokens the model actually processes.
All five use forced target-row syntax and the same bounded numeric grammar. The one-person condition uses the same continuation procedure as the other four. It differs
from the original vLLM experiment, which generates complete arrays. Comparisons of first
and later positions also change profile order in the prompt.

Eight invalid original histories exclude all five associated continuations. The $1{,}192$
eligible cases yield $5{,}960$ new outputs, retaining the full $6{,}000$-row planned disposition.
Five-arm common support contains $394$, $395$ and $394$ profiles across the three repetitions.
There are $1{,}190$ valid singleton targets, $1{,}190$ target-first outputs, $1{,}192$ visible
outputs, and $1{,}183$ in each hidden arm before matching. Invalid outputs are retained in
the audit and never replaced. Settings and the analysis contrasts were fixed before this
generation, after the earlier findings were known; this is an exploratory reused cohort.

\begin{table}[htbp]
\centering
\caption{Same-runtime Qwen/NHANES continuations, three-repetition means on five-arm common
support. $P$ is correlation-vector Pearson agreement, $A_{\rm or}$ human-oracle scalar MAE,
and $E_\beta$ the fixed PHQ-9 coefficient error. All five arms use forced target syntax.}
\label{tab:review_context}
\small\setlength{\tabcolsep}{4pt}
\input{generated/review_context}
\end{table}

\begin{table}[htbp]
\centering
\caption{Prespecified contrasts within the exploratory context follow-up. Paired bootstrap
draws resample target IDs within cells, carrying all arms and repetitions. The second interval
additionally resamples human reference rows within cells; neither is survey-design uncertainty.
All $2{,}000$ draws are defined in each sensitivity.}
\label{tab:review_context_contrasts}
\small\setlength{\tabcolsep}{3pt}
\input{generated/review_context_contrasts}
\end{table}

Both ways of hiding history reduce raw MAE, weaken correlations and improve pattern
agreement relative to visible history. With contiguous positions, marginal error still rises:
$M_{\rm SD}=0.446$ versus $0.402$ with visible history. A position gap alone therefore does
not account for the mean gain in this setting. The target-first profile context raises raw
MAE relative to singleton generation, despite slightly better pattern agreement and oracle
MAE. Profile context can therefore change relationship strength and pattern differently. These
comparisons cannot assign separate causal shares of the original batching effect to context
and history.

Checks before generation verified the original runtime and source bindings. Twenty fictional
cases compared individual and padded execution of all five routes through three forced
decoding steps, using the existing maximum-logit and total-variation limits. An independent CPU audit rechecks
native tokens, grammar, routes, positions and hashes, and reproduces correlations with explicit
weighted NumPy covariance. It verifies saved GPU gate evidence rather than independently
rerunning the GPU computations. All $1{,}192$ recovered raw case files match their audit hashes.

\subsection{Runtimes, validation and independent audits}

Original Qwen/NHANES history/position studies use the pinned checkpoint in FP16, Transformers
$5.9.0$, PyTorch $2.11.0$, xgrammar $0.2.2$ \citep{dong2025xgrammar}, stock hybrid-state SDPA
and two RTX6000 Ada GPUs. The crossed pairing study uses Qwen FP16 and Llama NF4 double
quantization with FP16 computation (bitsandbytes $0.50.0$ \citep{bitsandbytes2026}) on one GB10,
with Transformers $5.15.0$, PyTorch $2.13.0$ and xgrammar $0.2.3$. Missing offline FP8 kernels
motivated NF4 before generation; the permitted smaller-model fallback was unused.
Cross-model magnitudes do not isolate architecture.

The crossed Qwen/NHANES study repeats the same stimuli and seeds to check agreement
between runtimes, rather than adding an independent experimental setting: its contrast is $1.633337$ originally and $1.633224$ on GB10. Runs are not
pooled as extra seeds. Original studies use temperature $1$, unrestricted top-$p$/top-$k$,
independent CPU random streams paired across conditions, and a nonnegative numeric grammar
allowing four integer/two decimal digits; operational ranges are checked afterward.
Caps are $128$ target tokens for controlled pairs, $192$ for natural five-outcome answers,
and $1{,}024$ for histories. Completed invalid/truncated outputs are never selectively rerun.

The accuracy extensions use two RTX6000 Ada GPUs: Llama NF4/FP16 SDPA and Qwen FP16 SDPA,
distinct from Llama's original FP8 and controlled-history GB10 runtimes.
Following failed fictional validation, ACS adopts a variable-specific range grammar allowing
signed values as required, seven integer/six decimal digits, source microbatches of four,
singleton target execution, and $2{,}048/256$ source/target caps. Singleton execution avoids
failed padded-target numerical checks; Llama/NHANES retains $1{,}024/192$ caps.
Failed attempts and source snapshots remain archived.

Before scientific generation, numerical checks compare one-shot and split replay of native
tokens, individual and batched execution with visible history, and multiple decoding steps
with gapped hidden history. They test finite caches, prompt sensitivity, grammar and ranges. Qwen checks sixteen KV and forty-eight convolution/recurrent pairs; Llama
checks eighty KV pairs. Duplicate hidden routes match states, logits and common-seed outputs
exactly; an empty-history sham is also technical, not scientific, evidence. Admission thresholds
remain maximum logit difference $0.5$ and total variation $0.02$; crossed Qwen/Llama gates
attain $0.0723/0.00435$ and $0.1094/0.00696$. Amended ACS source/target checks pass the same limits.

Each run records its checkpoint inventory, tokenizer, input and source hashes, packages,
devices and precision. The position studies also match the original runtime identity.
Outputs are saved atomically before validity decisions. Independent CPU audits reconstruct native outputs, validity, routes, positions,
seeds and hash bindings, then recompute endpoints and bootstrap draws without trusting derived
validity flags. Extension statistics agree within $10^{-12}$; ACS additionally replays every token
through the production grammar checker. These audits do not independently rerun GPU arrays,
rehash physical model weights or rebuild extension demographic plans.
The anonymous supplement includes audit code, artificial tests and aggregate replay artifacts,
not real examples, profiles or individual generated records.

%% file: generated/history_extensions.tex
\begin{tabular}{llrrrrrr}
\toprule
Model & Domain & $A_{\rm visible}$ & $A_{\rm hidden}$ & Reduction & 95\% interval for $\Delta A$ & $M_{\rm visible}$ & $M_{\rm hidden}$ \\
\midrule
Qwen & ACS & 0.166 & 0.103 & 38\% & $[-0.117, -0.015]$ & 0.0390 & 0.0399 \\
Llama-70B (NF4) & NHANES & 0.577 & 0.300 & 48\% & $[-0.306, -0.243]$ & 0.0767 & 0.1060 \\
\bottomrule
\end{tabular}

%% file: sections/app_downstream.tex
\section{Exploratory downstream regression checks}\label{app:downstream}

These checks test whether improvements in correlation or marginal accuracy translate into better
regression estimates. The specifications were fixed after inspecting those earlier results
and before computing the regressions. They reuse the same outputs, cohorts and human
references, so the checks remain exploratory. The regressions describe associations and
do not provide causal or clinical validation.

\paragraph{Fixed specifications and common scale.}
ESS regresses trust in police on income position, age, happiness and satisfaction
with democracy. NHANES regresses PHQ-9 depression score on sedentary minutes, BMI,
poverty--income ratio and self-rated health. ACS regresses wage/salary income on
weekly hours and weeks worked, retaining zero wages and excluding total income,
which contains wage income. Every fit retains the original scales and observations and uses all specified predictors.

Each fit includes group fixed effects and observation weights $1/(|C|n_c)$,
giving each country or demographic cell equal weight. Comparing coefficients across
conditions requires a common scale. Pooled within-group standard deviations from the
human reference provide that scale: the same values standardize every human and
synthetic coefficient. A method cannot therefore improve this measure simply
by changing the standard deviations used for scoring. For the selected $k$ predictors,
the error measure averages absolute differences from the human coefficients:
\[
 E_{\beta}=\frac{1}{k}\sum_{j=1}^{k}
 \left|\frac{s_{H,j}}{s_{H,y}}\bigl(\widehat\beta_{S,j}-\widehat\beta_{H,j}\bigr)\right|.
\]
Here $\widehat\beta$ denotes native-unit slopes; human scales $s_H$ and reference slopes
are fixed. Each fit requires twenty finite complete observations per original group,
positive human scales, and a predictor-covariance condition number no larger than $10^{10}$
on the common standardized scale. One unsupported repetition suppresses its three-repetition
summary; no pseudoinverse or reduced specification is substituted. All $162$ planned fits
are supported and agree with independent explicit-dummy weighted least squares. Aggregate
outputs retain native/standardized slopes, $552$ coefficient comparisons, repetitions and
quality flags. No population-confidence interval or coefficient-wise significance is claimed.

\paragraph{Natural histories.}
Qwen/NHANES uses the six-arm common cohort from Appendix~\ref{app:history_channel}.
Hiding history lowers the average coefficient error at every combination of target
position and repetition, giving nine improvements. That average can hide changes in
individual coefficients: $12/36$ coefficient--repetition errors worsen. In particular, mean BMI-coefficient error rises by $\RelHistoryBMITwo\%$ at position two
and $\RelHistoryBMIFive\%$ at position five. Moving the target changes history length, order and preceding profiles together, so these
comparisons cannot isolate the effect of history length.

\begin{table}[t]
\centering
\caption{Exploratory PHQ-9 regression: three-repetition mean standardized-coefficient
MAE $E_{\beta}$ on the Qwen/NHANES six-arm common cohort. Hiding histories lowers error
by $25\%$, $30\%$ and $35\%$ at positions two, five and ten, respectively.
``Improved'' refers to the mean over four coefficients, not each coefficient.}
\label{tab:downstream_histories}
\footnotesize
\input{generated/downstream_histories}
\end{table}

\paragraph{Regression accuracy after correction.}
Relative to untouched batching, the learned map reduces mean coefficient error by
$\RelDownstreamESSQwenLearned\%$ for Qwen/ESS, $\RelDownstreamESSLlamaLearned\%$ for
Llama/ESS and $\RelDownstreamNHANESQwenLearned\%$ for Qwen/NHANES; Llama/NHANES changes
little ($\RelDownstreamNHANESLlamaLearned\%$ reduction). ACS error rises by
$\RelDownstreamACSQwenLearned\%$ for Qwen and $\RelDownstreamACSLlamaLearned\%$ for
Llama, from a small Qwen baseline of $0.0315$. Overall, $27/60$ coefficient--repetition
and $6/18$ repetition-level mean errors worsen. Exact preservation lowers Llama/ACS error
by $\RelDownstreamACSLlamaPreserved\%$ but raises Qwen/ACS and Llama/NHANES errors by
$\RelDownstreamACSQwenPreserved\%$ and $\RelDownstreamNHANESLlamaPreserved\%$.
In Qwen/ACS, uncorrected batching outperforms the other seven original methods.
Table~\ref{tab:review_downstream} separately reports the later scalar controls. The scalar
that preserves marginals exactly improves this setting, so the original ranking does not
extend to every subsequent variant.
Lower average correlation error therefore does not guarantee better regression coefficients.

\begin{table}[t]
\centering
\caption{All correction variants: three-repetition mean standardized-coefficient
MAE $E_{\beta}$ for the fixed domain-specific regressions. Lower is better; all
variants share the original matched evaluation IDs within each model and repetition.
$\dagger$: the six original ACS size-ten generation-quality failures remain included.}
\label{tab:downstream_corrections}
\footnotesize
\setlength{\tabcolsep}{3pt}
\input{generated/downstream_corrections}
\end{table}

%% file: generated/downstream_histories.tex
\begin{tabular}{rrrrr}
\toprule
Position & Visible & Hidden & Hidden minus visible & Repetitions improved \\
\midrule
2 & 0.105 & 0.079 & -0.026 & 3/3 \\
5 & 0.134 & 0.093 & -0.040 & 3/3 \\
10 & 0.156 & 0.101 & -0.055 & 3/3 \\
\bottomrule
\end{tabular}

%% file: sections/app_correction.tex
\section{Dependence-map construction and diagnostic comparisons}\label{app:correction}

The persistence of exaggerated relationship strength raises a question about correction:
how much does a full set of pair-specific adjustments contribute beyond weakening all
relationships together? A comparison between a full matrix of targets and a single overall
adjustment addresses this question using the same one-person fitting outputs. Both rely
on standard normal-score and whitening/recoloring operations, with marginal-preserving
rank adjustment also established \citep{iman1982distribution}. Their value here is
diagnostic: removing pair-specific target information reveals how much of the apparent
gain depends on it. These checks reuse previously examined outputs and human references,
so the results remain exploratory.

\paragraph{Pair-specific and single-parameter corrections.}
For each domain, model and seed, whole requests are divided into fitting and evaluation
sets, with profiles matched to the corresponding one-person outputs. Each set plans fifty
profiles per group ($1{,}000$ in ESS or $400$ in NHANES/ACS), and fitting uses only
answers accepted under both request sizes. Human answers provide the evaluation reference
but do not select the corrections' parameters.

The correction first puts each variable on a common scale by ranking fitting answers
within each group and mapping their percentiles to a standard normal distribution.
These transformed values, or normal scores, provide the basis for estimating correlations;
a fixed random ordering keyed to respondent IDs resolves ties. Let $R_B,R_T$ denote the
correlation matrices for batched and one-person fitting scores, pooling group covariances
with equal weights. The full correction uses these matrices to move evaluation scores
$z$ toward the one-person dependence structure:
\begin{equation}\label{eq:correction_map}
z^*=(z-\mu_{B,c})D_B^{-1}\widetilde R_B^{-1/2}\widetilde R_T^{1/2},
\qquad \widetilde R=0.9R+0.1I,
\end{equation}
where $\mu_{B,c}$ and $D_B$ are fitting group means and pooled standard deviations, and
matrix powers are symmetric. Returning to answer units involves finding each corrected
score's percentile among the corrected fitting scores, then assigning the value at that
percentile in the one-person fitting answers. Although this step draws on the one-person
reference, the resulting evaluation distributions and correlations need not match it exactly.

The scalar correction follows the same procedure but multiplies every batched off-diagonal
correlation by one number, leaving the diagonal at one. It replaces $R_T$ with
$I+\widehat\alpha(R_B-I)$, where
\begin{equation}\label{eq:scalar_fit}
\widehat\alpha=\arg\min_{0\leq\alpha\leq1}\sum_{j<k}|\alpha R_{B,jk}-R_{T,jk}|.
\end{equation}
One multiplier therefore applies to all pairs within a model, domain and seed. Keeping
all other operations and fitting data fixed makes the full map's advantage a test of
the additional value of pair-specific information.

\begin{table}[t]
\centering
\caption{\textbf{A single overall adjustment retains most of the full correction's improvement.}
Three-seed mean correlation MAE on matched evaluation respondents. $A_0$ is the error
from predicting zero correlation; one-person outputs provide another comparator. Gain
retained is $(A_B-A_{\rm scalar})/(A_B-A_{\rm full})$. Both corrections use one-person
fitting marginals and exclude human evaluation answers from fitting. $\dagger$: all six
ACS ten-person quality failures are included.}
\label{tab:review_scalar}
\small\setlength{\tabcolsep}{4pt}
\input{generated/review_scalar}
\end{table}

\paragraph{The value of pair-specific information.}
The scalar retains 93--101\% of the original map's MAE improvement, leaving a mean
advantage of only $0.004$--$0.013$ for the full map in five settings; in Qwen/NHANES,
the scalar performs slightly better (Table~\ref{tab:review_scalar}). Most of the gain
thus survives without a separate target for each pair, although both methods still use
one-person data to set marginal distributions and overall correlation strength.

The baseline comparisons further limit what lower MAE establishes. Both full corrections
in NHANES have larger errors than predicting zero correlations, while setting the target
to an identity matrix gives lower MAE still ($0.149$ for Qwen and $0.135$ for Llama)
despite much weaker pattern agreement. This identity target asks for zero correlations
on the normal-score scale; conversion back to answers can change those correlations,
so the final dataset need not have zero Pearson correlations.
Appendix~\ref{app:review_metrics} reports pattern, sign and zero-baseline skill for every
principal condition.

\paragraph{What the transformation changes.}
Examining the fitting data at successive stages helps explain why the full correction
can outperform the one-person outputs it uses as a reference. Immediately after the
linear adjustment, normal-score correlations are too strong on average in all six
comparisons, but conversion back to answer units leaves them too weak relative to the
one-person fitting answers. The later stages therefore change relationship strength in
a direction that can improve MAE without recovering the human pattern. Removing the
$0.1$ regularization also improves MAE in five settings, so that regularization alone
cannot explain the gains. Table~\ref{tab:review_sensitivity} examines the further roles
of midpoint tie treatment and conversion to normal scores and back without a linear adjustment.

\paragraph{Preserving distributions and testing usefulness.}
Because the correction changes individual-variable distributions as well as relationships,
a further comparison holds those distributions fixed. For each group and variable,
it keeps the original list of batched values and assigns them to people in the order of
their corrected scores. The values themselves stay the same; what changes is which person
receives each value. This exploratory Iman--Conover-style comparison uses the evaluation
batch itself, preserving every marginal distribution and variance. Changes in those
distributions, including their influence on pooled correlations, can therefore no longer
explain a change in accuracy, although reordering can still weaken relationships or
change agreement with profiles.

Under exact preservation, the full map has MAE $0.179$ for Qwen and $0.186$ for Llama
in ESS, with corresponding values of $0.247$ and $0.303$ in NHANES and $0.160$ and
$0.210$ in ACS. These averages conceal one worsening ACS/Llama seed. The scalar controls
also undergo evaluation with exact preservation, without selection based on human-reference
performance (Appendix~\ref{app:review_scalar}).

The practical importance of preserving distributions is apparent in the original
correction, which worsens individual-variable accuracy in every setting. In ACS,
SD-normalized error rises from $0.376$ to $0.423$ for Qwen and from $0.418$ to $0.525$
for Llama. Nor does lower correlation error guarantee better regression estimates:
ACS/Qwen coefficient MAE rises from $0.032$ to $0.124$ under the full map and to
$0.110$ under the scalar. Other settings, including ESS and Qwen/NHANES, show improvements
(Table~\ref{tab:review_downstream}), making the intended analysis essential to deciding
whether a correction is useful.

\paragraph{Reference size and practical value.}
The study plans as many one-person fitting rows as batched evaluation rows, making direct
use of the one-person outputs a practical alternative. Reducing the one-person reference
from fifty to as few as five rows per group, while keeping all batched fitting rows,
often lowers MAE further (Figure~\ref{fig:review_reference_size}). This pattern is consistent
with additional weakening of correlations rather than a benefit from more information,
and does not determine a recommended reference size. Any claim of a cost or throughput
advantage over parallel one-person requests would also require measurements of prefix
reuse, batching efficiency and decoding costs in the intended serving system.

\paragraph{Marginal-error scales.}
The distributional comparisons use two normalizations of Wasserstein distance. For $p$
outcomes and groups $C$, these are
\begin{equation}\label{eq:marginal_error}
 M=\frac{1}{|C|p}\sum_{c,j}\frac{W_1(\widehat F^{\rm synth}_{cj},\widehat F^{\rm real}_{cj})}{u_j-\ell_j},
 \qquad
 M_{\rm SD}=\frac{1}{|C|p}\sum_{c,j}\frac{W_1(\widehat F^{\rm synth}_{cj},\widehat F^{\rm real}_{cj})}{s^{\rm real}_{cj}}.
\end{equation}
Here $W_1$ is the empirical Wasserstein distance, $[\ell_j,u_j]$ the fixed operational
range, and $s^{\rm real}_{cj}$ the human-reference standard deviation. The range scale
is reported for all comparisons, with greater emphasis on $M_{\rm SD}$ for ACS;
preserving each marginal exactly also preserves both error measures.

%% file: sections/app_review_diagnostics.tex
\section{Relationship strength, pattern and correction diagnostics}\label{app:review_diagnostics}

These analyses examine how much of an apparent improvement comes from weakening
correlations and how much reflects a better pattern of relationships. Their design followed
review of the original results. They include all six model--domain settings and all three
seeds, including quality failures. New fits use only the original generated fitting data.
Transformed rows were saved before scoring against human references.
This keeps evaluation answers out of parameter fitting, although the previously examined
references do not provide a new confirmatory test. The supplement includes aggregate
tables, source hashes and analysis code.

\subsection{Strength, pattern and the zero-correlation baseline}\label{app:review_metrics}

These diagnostics use the within-group relationship measure in Equation~\ref{eq:within}.
The calculation gives groups equal weight, averaging their within-group covariance
matrices before converting to correlations. This gives one pooled within-group correlation for each variable
pair. The controlled history-pairing experiment uses a separate endpoint without a human
reference, defined in Appendix~\ref{app:history_channel}.

A vector $x$ collects the synthetic correlations, and a vector $y$ collects the
corresponding human correlations, with variable pairs in the same order. With $p$ variables,
each vector has $K=p(p-1)/2$ entries. All entries use the original equal-group
pooled-covariance scale. Two benchmarks put the observed error in context: predicting
zero for every pair, and choosing the best single multiplier for all synthetic correlations.
The diagnostics are
\begin{align*}
 A&=K^{-1}\sum_k|x_k-y_k|,\qquad A_0=K^{-1}\sum_k|y_k|,\qquad S_0=1-A/A_0,\\
 A_{\rm or}&=\min_{\alpha\in[0,1]}K^{-1}\sum_k|\alpha x_k-y_k|.
\end{align*}
Mean signed error is $K^{-1}\sum_k(x_k-y_k)$. Because positive and negative errors
can cancel, a second measure captures magnitude bias as the average of $|x_k|-|y_k|$. Tables~\ref{tab:review_metrics_ess_frontier}--\ref{tab:review_metrics_acs} therefore
show mean $|x|$ beside the human value $A_0$. The aggregate files additionally retain both
bias measures and the fraction of inflated pairs for every scored condition.

$P$ and $\rho$ are Pearson and Spearman correlations between $x$ and $y$. They ask
whether pairs with stronger positive relationships in the human data also tend to have
stronger positive relationships in the synthetic data, and likewise for negative ones.
Spearman uses the ranking of the signed correlations; Pearson uses their numerical values.
Both describe agreement across variable pairs, rather than accuracy for individual
respondents. Sign agreement is the fraction
with $\operatorname{sign}(x_k)=\operatorname{sign}(y_k)$; an additional aggregate sensitivity
restricts to $|y_k|>0.05$. Pairs share variables and are not independent replications. ACS has
only six pairs, making pattern summaries particularly sensitive to individual relationships.
The original rank and sign conventions are retained across methods.

The zero predictor assigns zero to every correlation. It supplies a baseline for the
estimates, without generating a dataset or setting a lower bound on error. An identity
target instead applies a regularized correction on the normal-score scale and converts
back to answers, whose Pearson correlations need not be zero. The oracle $A_{\rm or}$
uses the human evaluation reference to choose its multiplier and serves only as a diagnostic. A scalar optimum
is the clipped weighted median of $y_k/x_k$ with weights $|x_k|$ over nonzero $x_k$.
Zero source entries contribute constant loss. This rule exactly minimizes the constrained
absolute-error objective and is independently checked against linear programming for the
eighteen generated-reference latent fits.

\begin{table}[htbp]
\centering
\caption{ESS calibration cohort: three-seed means, $A_0=0.193$. $P/\rho$: Pearson/Spearman
correlation-vector agreement; Sign: sign agreement; $A_{\rm or}$: human-oracle scalar MAE.
Exact variants preserve evaluated batch marginals; answer-fit scalar selects its parameter
using generated fitting answers. $M_{\rm SD}$ is defined in Equation~\ref{eq:marginal_error}.}
\label{tab:review_metrics_ess_frontier}
\scriptsize\setlength{\tabcolsep}{3pt}
\input{generated/review_metrics_ess_frontier}
\end{table}

\begin{table}[htbp]
\centering
\caption{NHANES calibration cohort: same diagnostics as
Table~\ref{tab:review_metrics_ess_frontier}, with $A_0=0.160$. The full and scalar maps improve
raw MAE while remaining worse than the constant-zero correlation predictor.}
\label{tab:review_metrics_nhanes}
\scriptsize\setlength{\tabcolsep}{3pt}
\input{generated/review_metrics_nhanes}
\end{table}

\begin{table}[htbp]
\centering
\caption{ACS calibration cohort: same diagnostics, $A_0=0.356$. All six original ten-person
quality failures remain included. SD-normalized marginal error is foregrounded because
the range scale downweights errors on the broad monetary ranges. There are only six pairs.}
\label{tab:review_metrics_acs}
\scriptsize\setlength{\tabcolsep}{3pt}
\input{generated/review_metrics_acs}
\end{table}

\paragraph{Fisher-scale sensitivity.}
The primary error measure uses differences on the Pearson-correlation scale. An
exploratory check instead averages $|\operatorname{atanh}(x_k)-\operatorname{atanh}(y_k)|$, without
refitting. It places greater weight on differences near perfect correlation. No values are
clipped: a nonfinite transform would invalidate the entire endpoint. All displayed endpoints
are finite. Batching still increases error in the primary ESS study, while full-map versus
scalar rankings change in ACS (Table~\ref{tab:review_fisher}). Changing the error scale therefore does not remove the need to evaluate the intended
analysis directly.

\begin{table}[htbp]
\centering
\caption{Fisher-$z$ correlation MAE, three-seed means. The first two rows use the original
ESS request experiment; subsequent rows use the disjoint fitting/evaluation calibration
study. No parameter is chosen using this sensitivity. ACS retains its quality failures.}
\label{tab:review_fisher}
\small\setlength{\tabcolsep}{5pt}
\input{generated/review_fisher}
\end{table}

\subsection{Single-parameter corrections, fixed marginals and reference size}\label{app:review_scalar}

The primary scalar replaces only the target correlation matrix in the original map with
$I+\widehat\alpha(R_B-I)$, using Equation~\ref{eq:scalar_fit}. Source/target regularization,
normal-score conventions, group fitting means/scales and singleton inverse CDFs remain fixed.
Both methods therefore use the same fitting data and assign marginal values in the same
way. The comparison tests the added value of learning each pair's correlation from
one-person outputs. It leaves open how to choose a multiplier using batched outputs alone.

A second scalar control chooses $\alpha$ from $0,0.05,\ldots,1$ by minimizing the transformed
\emph{fitting answers'} Pearson-correlation MAE against singleton \emph{fitting answers}.
Selection is done separately for singleton and exact-batch marginal assignments. The supplement reports every grid score. Selection does not use the human reference or held-out
one-person answers to choose a parameter, marginal assignment or reported setting. Direct answer-scale correlation-estimate
calibration is additionally exported as an estimate-only diagnostic, with no synthetic-dataset
or regression claim.

For exact marginal preservation, each correction reorders the same original values within
each group and column. This holds group variances fixed. Otherwise, changes in those variances
could change pooled correlations even if each group's correlations stayed the same. Reordering can still attenuate Pearson
correlations, change their pattern and move values between profiles. Exact preservation therefore removes marginal replacement as an explanation but does not
remove all possible weakening of correlations. The tables report both scalar choices and
the full map, without selecting a winner using the human reference.

To test how many one-person outputs the correction needs, a further comparison reduces
the number available for fitting while keeping all batched fitting rows. Within each
group, a fixed hash orders the one-person rows. Each fit uses the first five, ten, twenty
or thirty-five, so each larger subset contains the smaller ones. Repeating this with
three orders per generation seed gives nine results at each reduced budget. The full original fit has one result per seed,
or three in total. The five/ten-row variants fall below the original twenty-row support rule
and are explicitly exploratory; they require five rows per group and finite positive latent
variances. All reported fits are supported. The full ESS reference has $990$--$1{,}000$
paired fitting rows depending on retained requests; the other domains similarly retain their
original acceptance losses. The figure uses realized mean budgets rather than assuming
every planned row survived.

Smaller references often improve MAE. For example, the ESS/Qwen full map has MAE $0.126$
with one hundred singleton rows, compared with $0.147$ at the full budget. This does not establish that fewer responses estimate relationships more accurately.
Coarser empirical distributions and noisier normal-score estimates can change how strongly
the correction weakens correlations. The curves serve as diagnostics; they do not select
a budget or establish an optimum for cost or analytic usefulness.

\begin{figure}[htbp]
\centering
\includegraphics[width=\textwidth]{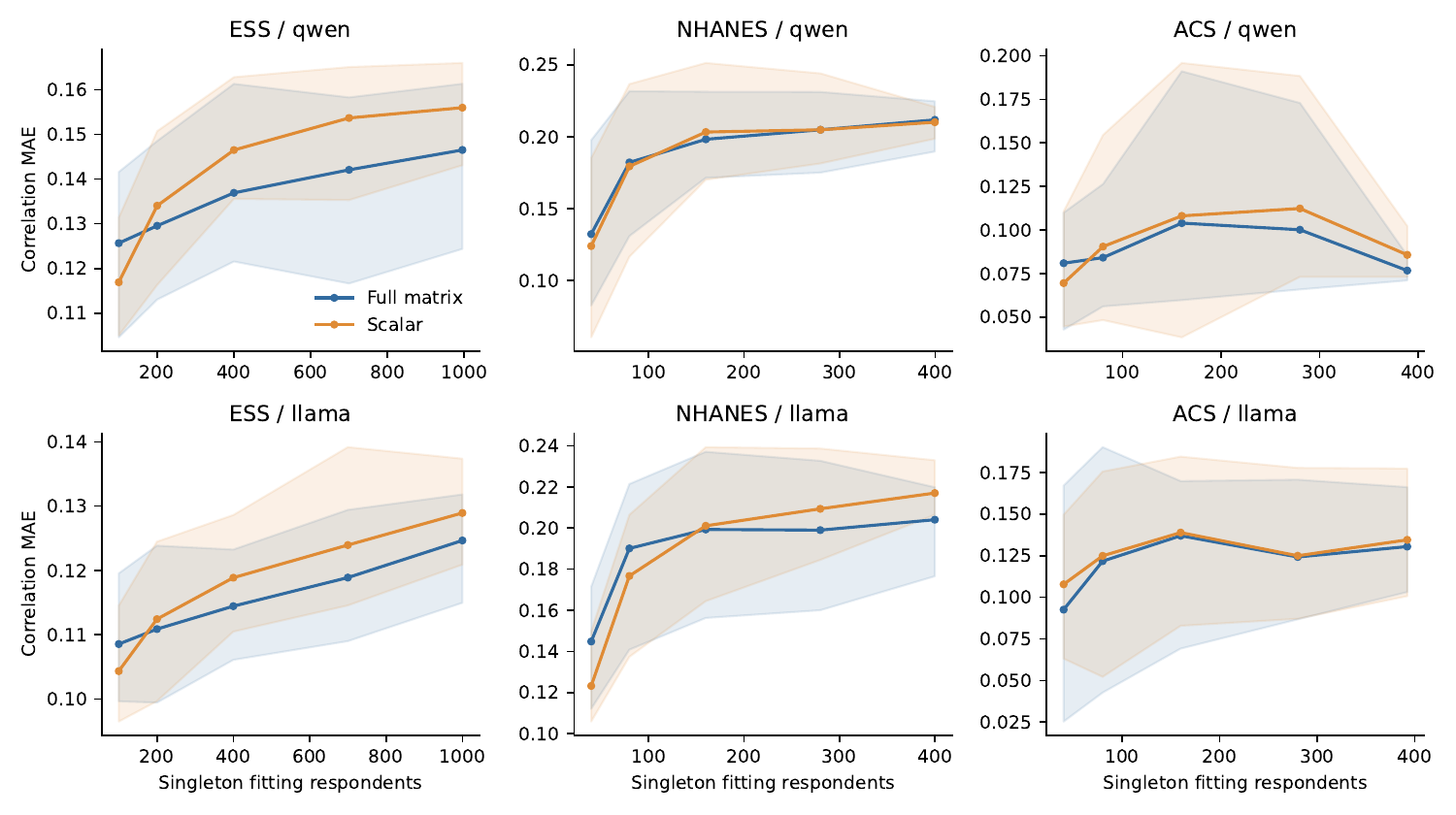}
\caption{Nested singleton-reference sizes, keeping all batched fitting rows. Shading is the
observed seed/subset-order range, not a confidence interval. Full matrix and scalar use the
same references and singleton marginal assignment. All ACS quality failures remain included;
five/ten rows per group are exploratory low-support variants.}
\label{fig:review_reference_size}
\end{figure}

\subsection{What the transformation actually reproduces}\label{app:review_transform}

For the unregularized latent fitting covariance $R_B$ and linear operator
$T=\widetilde R_B^{-1/2}\widetilde R_T^{1/2}$, the transformed covariance is $T^\top R_BT$.
Regularizing matrices inside $T$ does not make this equal to $R_T$, nor fix its diagonal.
Conversion to correlations and the subsequent empirical CDF/inverse-CDF maps introduce
further changes. The diagnostics therefore check the output of both stages: correlations after the
linear step against $R_T$, and final fitting-answer correlations against those of the
one-person fitting answers.

\begin{table}[htbp]
\centering
\caption{Fitting-target reproduction, three-seed means. Magnitude bias averages actual minus
target $|r|$ at each stage. Final answer correlations undershoot singleton fitting magnitudes
in all six settings, despite latent overshoot. ACS quality failures remain retained.}
\label{tab:review_fit_targets}
\small\setlength{\tabcolsep}{3pt}
\input{generated/review_fit_targets}
\end{table}

Table~\ref{tab:review_sensitivity} varies $\widetilde R=(1-\gamma)R+\gamma I$, replaces randomized
ties with midpoint ties, or sets the linear operator to identity while retaining the remaining
CDF and marginal-assignment steps. These checks use fixed settings, without tuning them to the human reference.
All zero-regularization fits are supported without an added eigenvalue floor. Removing
regularization improves MAE in five settings, the exception being Qwen/ACS. Hence the original
$0.1$ regularization is not sufficient to explain the advantage over one-person outputs.
The nonlinear stages change the resulting correlations. These checks cannot assign
separate shares of that change to tie randomization, discrete empirical CDFs, marginal
replacement and finite samples.

Converting values to normal scores and back can change individual rows even with an
identity linear operator and exact marginal preservation. Multiple unseen values can fall
in the same flat interval of a fitting CDF and receive tied normal scores. Breaking those
ties by ID can reorder the values. Exact preservation guarantees the distribution of values,
not each respondent's original value.

\begin{table}[htbp]
\centering
\caption{Transformation sensitivity: mean correlation MAE on unchanged matched evaluation
cohorts. Q/L denote Qwen/Llama. Identity-linear variants retain the nonlinear rank/CDF steps;
they differ from the identity \emph{target}. All ACS quality failures remain included.}
\label{tab:review_sensitivity}
\scriptsize\setlength{\tabcolsep}{3pt}
\input{generated/review_sensitivity}
\end{table}

\subsection{Request means, dominant factors and predecessors}\label{app:review_mechanism}

This check asks whether batching mainly makes whole requests differ from one another.
For example, a request that produces unusually high answers on several variables could
increase their pooled correlation even without stronger relationships within that request.
Subtracting each request's mean from each variable removes such shifts. For comparison
with one-person generation, its outputs are first collected into blocks containing the
same ten profiles as the batched requests. Each block therefore contains
matched profiles under both request sizes. Within group, total covariance decomposes
into the covariance of request means and the weighted mean of within-request covariances.
The implementation independently verifies this identity. Matching can leave unequal accepted
block sizes, so request means receive their observed row weights within each group.

The main comparison asks whether removing request means reduces error against the
original, group-centered human reference. Subtracting request means changes the quantity
being measured, so a second comparison also centers source-human outcomes within the identical blocks.
That second comparison describes the particular source cohort. Neither comparison assigns
causal shares of the batching effect to a model mechanism.

\begin{table}[htbp]
\centering
\caption{Original ESS matched cohort, three-seed means. $A_{\rm req}$ removes matched request
means, retaining the group-centered human comparator. $L_1/p$ is the largest correlation
eigenvalue's share, before/after request centering. Human mean $|r|$ is $0.193$.}
\label{tab:review_requests}
\scriptsize\setlength{\tabcolsep}{3pt}
\input{generated/review_requests}
\end{table}

After removing request means, the leading eigenvalue still accounts for a larger share
under batching. This is consistent with stronger shared variation among outcomes within
requests, although it does not identify a single underlying psychological dimension. Differences
between request means alone cannot explain the observed penalty.

At position two, the target has only one preceding respondent. The question is whether
a common feature of that respondent's answers predicts several of the target's answers
together. For Qwen/NHANES, the analysis standardizes the predecessor's five outcomes
and summarizes them with their first principal component. This component explains
$52.2\%$ of their variance. Its loadings on poverty--income ratio, BMI, health, PHQ-9 and
sedentary minutes are $-0.470,0.487,0.532,0.468,0.200$. After scaling to unit standard deviation,
the component predicts each target outcome in a separate regression.
Each regression includes target-profile and repetition fixed effects; target outcomes
are standardized separately by arm. The $1{,}186$ jointly valid cases span
all $400$ target profiles. Standard errors cluster by target.

\begin{table}[htbp]
\centering
\caption{Exploratory position-two predecessor regressions, conditional on profile and repetition
fixed effects. Visible slopes oppose the predecessor loadings for all five outcomes, but
are small. These are descriptive associations, not identified causal mediation effects.}
\label{tab:review_predecessor}
\small\setlength{\tabcolsep}{4pt}
\input{generated/review_predecessor}
\end{table}

The visible slopes are consistent with a weak contrast response, but generated predecessor
variation was not independently randomized for this regression. Hidden routes still retain
history-length information. The regressions therefore cannot establish how much distortion comes from a predecessor's
common influence on all outcomes. Other ways of responding to history remain possible.

\begin{table}[htbp]
\centering
\caption{History diagnostics on each study's original common cohort. Three-seed means use
six-arm support for Qwen/NHANES and two-arm support for the extensions. $A_0=0.160$ for
NHANES and $0.356$ for ACS. The strong raw Llama/NHANES gain disappears after oracle
attenuation; Qwen/NHANES at position ten retains an improvement.}
\label{tab:review_history}
\scriptsize\setlength{\tabcolsep}{3pt}
\input{generated/review_history}
\end{table}

\subsection{Reference uncertainty and downstream consequences}\label{app:review_uncertainty}

A separate sensitivity uses $1{,}000$ fixed-seed bootstrap draws of human respondents within
each reporting group. A given reference draw is shared by all methods, families and seeds.
Generated rows and fits are fixed. Each interval describes the paired difference between
three-seed mean errors; no interval is reported if any draw is undefined. All displayed
contrasts have zero undefined draws. The intervals describe sensitivity to the sampled human reference and omit uncertainty
from generation. Because the reference exports lack survey primary-sampling-unit and
household IDs, the bootstrap resamples respondents independently within groups rather
than reproducing the survey design. This matters particularly for
NHANES's $1{,}666$ reference respondents across eight groups and clustered ACS respondents.

\begin{table}[htbp]
\centering
\caption{Full-map minus scalar MAE, with human-reference-only percentile intervals.
Negative differences favour the full map. Generated outputs and fitted parameters are held
fixed; these intervals do not include generation or survey-design uncertainty.}
\label{tab:review_reference_uncertainty}
\small\setlength{\tabcolsep}{4pt}
\input{generated/review_reference_uncertainty}
\end{table}

The new scalar outputs are scored with the unchanged regressions and human standardization
from Appendix~\ref{app:downstream}. Benefits vary across settings and marginal assignments.
The original map worsens $27/60$ coefficient--repetition errors. Adding the scalar
comparisons still leaves no method uniformly best. In particular, ACS/Qwen's exact-marginal scalar
slightly beats uncorrected batching on the fixed regression, whereas its singleton-marginal
scalar performs much worse, despite the latter's lower correlation MAE.

\begin{table}[htbp]
\centering
\caption{Fixed downstream coefficient MAE on matched evaluation respondents. Full/scalar
use singleton fitting marginals; exact variants preserve evaluated batch marginals. Lower is
better. ACS retains its six original failed-quality ten-person arms.}
\label{tab:review_downstream}
\small\setlength{\tabcolsep}{4pt}
\input{generated/review_downstream}
\end{table}

\paragraph{Computational verification.}
An independent weighted-moment implementation reproduces all $1{,}674$ dataset-based
correlation-MAE endpoints within $7.1\times10^{-16}$. Separate linear programs verify eighteen
latent scalar fits; checks also verify fitting/evaluation separation, exact marginal multisets
and all grid selections. Checks reproduced the original map and reported baselines before the analysis was extended.
An initial scoring check failed because reference-row identifiers were absent. Adding
deterministic row-index IDs allowed a uniqueness check, leaving values, fitting, weighting
and selection unchanged. The failed run and amendment are archived. These checks establish
agreement between computations; they do not provide new empirical replications.

%% file: sections/app_temperature_prompt.tex
\section{Temperature and variation-instruction follow-up}\label{app:temperature_prompt}

\paragraph{Design and completion.}
This experiment tests whether the batching effect persists at other temperatures and without the
variation instruction. This exploratory follow-up generates new complete arrays using
the original $2{,}000$ ESS and $800$ NHANES profiles, examples and human references,
in the fixed Qwen BF16 and Llama FP8 vLLM runtimes. There are four conditions. Three
use the original prompt at temperatures $0.5$, $1$ and $1.5$. The fourth uses temperature
$1$ but removes exactly ``Preserve realistic individual variation.'' and its following
space. In every condition, each model generates one-person and ten-person requests
under three seeds. The adjacent instruction about internally consistent
or mutually different respondents remains. Removing the sentence changes both wording and prompt length. The deletion condition uses only
temperature $1$.
Profiles, examples, item/profile order, grammar, per-request seeds, other decoding
parameters and token limits remain fixed. The temperature-$1$ control is also newly generated. The allocation and analysis were fixed before generation, after inspection of earlier results. The reused cohorts do not provide a new human holdout. All four allocations completed: $26{,}400$ ESS and
$10{,}560$ NHANES requests per model, or $73{,}920$ requests in total.

\paragraph{Support and uncertainty.}
Invalid or truncated requests are rejected in full, without repair or regeneration.
Combining the four conditions with the two request sizes gives eight arms. For the
primary comparison, a profile enters the analysis only if its answers are valid in all
eight arms within that seed. This makes each comparison use the same people and must
leave at least twenty in every original group. The protocol also specifies two secondary analyses.
One matches profiles across request sizes separately within each condition; the other
uses every retained answer in each arm. A three-seed mean is withheld if any required
endpoint cannot be calculated. Its definition retains all original countries, pairs and seeds. Table~\ref{tab:ablation}
reports the within-condition matched secondary results. The retained profiles can differ between conditions. Those point comparisons therefore
cannot establish a population-wide effect of changing temperature or wording. Every Llama request passes validation,
so all three support definitions coincide for that model.

\begin{table}[htbp]
\centering
\caption{Correlation accuracy under each temperature and prompt condition. MAEs average
three seeds, matching accepted profiles across request sizes within each condition as
planned; $\Delta A=A_{10}-A_1$. Validity uses all planned requests before matching.
A dagger marks a condition in which any size or seed has less than $99\%$ validity
or any truncation. Dashes indicate a three-seed mean that cannot be calculated under
the support rule. These secondary estimates have no confidence intervals. Llama's
means also equal its primary estimates.}
\label{tab:ablation}
\small\setlength{\tabcolsep}{3pt}
\input{generated/review_ablation}
\end{table}

\begin{table}[htbp]
\centering
\caption{Primary contrasts using profiles accepted in all eight condition-by-size arms.
Intervals use $2{,}000$ paired request-block bootstrap draws within groups, carrying
all conditions and seeds together and holding the human reference fixed. The last
two columns compare the batching penalty with the new original-prompt $T=1$ control.
The fixed analysis rule determines which estimates and intervals are withheld.
Intervals are conditional and pointwise, without multiplicity adjustment. Daggers
follow Table~\ref{tab:ablation}'s rule.}
\label{tab:ablation_primary}
\scriptsize\setlength{\tabcolsep}{3pt}
\input{generated/review_ablation_primary}
\end{table}

At high temperature, Qwen loses enough ESS answers that the primary comparison fails
its minimum-count rule in seeds zero and one: the smallest groups contain nine and
zero profiles. The remaining seed does not replace the three-seed result. All Qwen/NHANES primary point estimates are
supported, but differ from the secondary means because the eight-arm intersection
retains different profiles. In the planned $2{,}000$ paired bootstrap draws,
$1{,}133$ fail support or variance checks, so the fixed rule suppresses every
Qwen/NHANES primary interval; failed draws are not discarded to form an interval.
No joint Qwen/ESS interval is formed. Both Llama allocations support every primary
point estimate and all $2{,}000$ bootstrap draws. These intervals hold the human reference fixed and do not cover new seeds, prompts
or cohorts.

\begin{table}[htbp]
\centering
\caption{Profile counts for the primary comparison and validity at high temperature.
Shared $N$ and the minimum group count apply to all eight arms within a seed.
Validity and truncation refer to original-prompt $T=1.5$ ten-person requests.
``Supported'' indicates that primary point estimates can be calculated; it does
not guarantee an interval.}
\label{tab:ablation_support}
\small\setlength{\tabcolsep}{3pt}
\input{generated/review_ablation_support}
\end{table}

\paragraph{Temperature and instruction results.}
At temperatures $0.5$ and $1$, and with the sentence removed, the within-condition
batching penalty is positive in all three seeds of both models and both domains.
All temperature-$0.5$ requests pass validation. The Qwen/ESS omission arm at seed
one retains $197/200$ ten-person requests and therefore fails the original quality
threshold; it remains included. The sentence is not necessary for the observed
penalty, and lowering temperature does not remove it. These results do not establish
equivalence between prompts or robustness to arbitrary wording.

Llama retains positive batching penalties at $T=1.5$ as well. In ESS, the penalty
falls from $0.219$ at $T=1$ to $0.188$, a change of $-0.030$ with conditional interval
$[-0.050,-0.012]$. Its lower-temperature and instruction-deletion changes have
intervals spanning zero. In NHANES, all four penalties are close, $0.183$--$0.197$,
and all three changes relative to $T=1$ have intervals spanning zero. Intervals spanning zero leave the differences uncertain; they do not establish
equivalence.

Qwen behaves differently at $T=1.5$: ten-person validity falls to $309/600$ in ESS
and $157/240$ in NHANES, including $110$ and four truncated requests, respectively.
Singleton validity is also lower: $5{,}480/6{,}000$ in ESS and $2{,}335/2{,}400$ in
NHANES. Lower MAE among retained high-temperature NHANES outputs coexists with
improved pattern and oracle-scaled metrics for some comparisons; it cannot all be
attributed to scalar attenuation. Because high temperature also changes which answers survive validation, lower error
among those answers does not establish an overall improvement in the generator.

\begin{table}[htbp]
\centering
\caption{Relationship strength, pattern and analytic accuracy for the same matched
profiles as Table~\ref{tab:ablation}. Every mean requires all three seeds; Qwen/ESS
$T=1.5$ fails this rule. $b$ is request size, $P$ is Pearson agreement between the
correlation vectors, $A_{\rm or}$ is MAE after the human-oracle magnitude adjustment,
$M_{\rm SD}$ is marginal Wasserstein error and $E_\beta$ is regression coefficient
MAE. Daggers use the rule above.}
\label{tab:ablation_metrics}
\scriptsize\setlength{\tabcolsep}{3pt}
\input{generated/review_ablation_metrics}
\end{table}

\paragraph{Magnitude and analytic utility.}
The fresh $T=1$ ESS controls support the magnitude-inflation interpretation.
Batching raises mean absolute correlation from $0.362$ to $0.454$ for Qwen and
$0.355$ to $0.585$ for Llama, against $0.193$ in the human reference. Oracle-scaled
MAE changes only from $0.097$ to $0.098$ for Qwen and $0.099$ to $0.098$ for Llama.
For Llama/ESS, raising temperature from $1$ to $1.5$ reduces batched mean absolute
correlation from $0.585$ to $0.550$, while pattern agreement and oracle-scaled MAE
change little. Raw MAE improvement need not mean a comparable recovery of structure.
The oracle is a diagnostic using human answers, not an evaluated repair.

Across the six supported Qwen domain/condition comparisons at $T=0.5$, $T=1$ and
after deletion, batching improves mean marginal error while worsening fixed regression
coefficient error. Llama improves mean marginal error in all eight comparisons.
Its regression error worsens in ESS but improves in NHANES under every condition,
even though correlation MAE worsens in both domains. These results reinforce the need to test the intended analysis alongside correlation
and marginal accuracy. Neither request size is uniformly better for the regression tasks.

\paragraph{Verification.}
Raw hashes, source/input bindings and independently parsed dispositions are verified
for all $73{,}920$ completed requests. An additional explicit NumPy within-group
covariance replay checks every supported primary and secondary correlation/MAE endpoint,
including agreement with the earlier Qwen report. Unsupported endpoints and withheld
intervals remain explicit. Aggregate endpoints, pair correlations, coverage, verification
records and frozen source accompany the completed two-model report.

%% file: sections/mechanism.tex
\section{Where within-group relationships can arise}\label{sec:mechanism}

There are two ways for a relationship to arise within a country. First, people with
different profiles may have different average answers. For example, if some profiles
predict higher trust in both parliament and the courts, those differences contribute to
a positive relationship between the two responses. Second, answers may vary together
even among people with the same profile: a person who reports more trust in parliament
than their profile predicts may also report more trust in the courts. Both sources of
variation can contribute to the observed relationship.

Let $x$ be a conditioning profile, $c=c(x)$ its group, and $m_k(x)=\E[Y_k\mid x]$.
In the opinion panel, $c$ is country and $x$ contains the ten prompt fields. With finite
second moments, the law of total covariance separates these two components:
\begin{equation}\label{eq:twochannel}
\Cov(Y_1,Y_2\mid c)=
\underbrace{\Cov_{x\mid c}\{m_1(x),m_2(x)\}}_{\text{between-profile component}}
+\underbrace{\E\{\Cov(Y_1,Y_2\mid x)\mid c\}}_{\text{within-profile component}}.
\end{equation}
The identity holds for real and synthetic data; neither term is intrinsically erroneous.
Zero conditional covariance removes the second term (Appendix~\ref{app:proof_eco}), but
autoregressive generation need not satisfy that restriction.

\paragraph{Relationships between groups and within groups.}
Between-country and within-country associations can differ \citep{robinson1950ecological}.
Equation~\ref{eq:twochannel} alone does not connect the two. Interpreting within-country
relationships as transferred from country means requires further assumptions about what
the model has learned (Appendix~\ref{app:proof_grad}). A generator can have
correlated country means but zero within-country correlations (Appendix~\ref{app:proof_eco}).
Conversely, correct conditional joint distributions \emph{and the real profile distribution}
reproduce the real joint distribution.

\paragraph{An aggregate-data predictor.}
The audit retrospectively predicts synthetic within-group correlations from
$\rho^{\mathrm{eco}}=\mathrm{Corr}_c(\E[Y_1\mid c],\E[Y_2\mid c])$, weighting groups by real
sample size. Cell means and sizes suffice; here they come from benchmark microdata.
This fit is retrospective, without a held-out forecast. Human within-group
and aggregate correlations may already align, so successful prediction does not establish
that the model transfers one to the other.

\paragraph{Descriptive decomposition.}
Classical moment estimators summarize profile and residual covariances
\citep{henderson1953estimation,searle1992variance}; Appendix~\ref{app:proof_vc} gives the proof.
Shared requests may violate the estimators' assumptions of independence and common
covariances (Appendix~\ref{app:vc_scope}). The fitted shares therefore describe the data
without identifying causal mechanisms. Identical profiles remove the between-profile
component, but the remaining relationships may still be inaccurate. Prompts and request
size may affect both components.

\paragraph{Several dimensions of accuracy.}
Correlation error $A$ and marginal Wasserstein error $M$ measure different properties
(Section~\ref{sec:correction}). Improvement in one need not improve the other, and one-person outputs can be inaccurate
on both. The correction experiments evaluate both properties without assuming that their
one-person reference is accurate or identifies a mechanism.

%% file: sections/app_proofs.tex
\section{Covariance identities and their scope}\label{app:proofs}

These identities clarify which assumptions connect differences between profiles to
relationships within groups. They also describe the limits of interpreting the fitted
covariance components in the supporting audit.

\subsection{Conditional-mean dependence}\label{app:proof_eco}

\begin{definition}[Per-field conditional generation]\label{def:perfield}
A generator is per-field conditional for a pair when
$Y_1=m_1(x)+u$, $Y_2=m_2(x)+v$, with
$\E[u\mid x]=\E[v\mid x]=0$ and $\Cov(u,v\mid x)=0$.
This is a covariance restriction, not conditional independence. Assume finite second moments
and $c=c(x)$ throughout.
\end{definition}
\begin{proposition}[Conditional-mean dependence]\label{prop:eco}
Under Definition~\ref{def:perfield},
$\Cov(Y_1,Y_2\mid c)=\Cov_{x\mid c}\{m_1(x),m_2(x)\}$.
\end{proposition}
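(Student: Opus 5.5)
The plan is to apply the law of total covariance (Equation~\ref{eq:twochannel}) conditional on $c$, and then show that the within-profile term vanishes under Definition~\ref{def:perfield}. Because $c=c(x)$ is a function of $x$, conditioning on $x$ refines conditioning on $c$. The tower property therefore gives $\E[\,\cdot\mid c]=\E\{\E[\,\cdot\mid x]\mid c\}$, which is all the decomposition needs.

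\textbf{First step: the decomposition.} Finite second moments guarantee that all conditional covariances exist. Write
\[
\Cov(Y_1,Y_2\mid c)=\E[Y_1Y_2\mid c]-\E[Y_1\mid c]\E[Y_2\mid c],
\]
and insert $\E[Y_1Y_2\mid x]=\Cov(Y_1,Y_2\mid x)+m_1(x)m_2(x)$ inside the outer expectation given $c$. Also use $\E[Y_k\mid c]=\E\{m_k(x)\mid c\}$. Together these yield
\[
\Cov(Y_1,Y_2\mid c)=\E\{\Cov(Y_1,Y_2\mid x)\mid c\}+\Cov_{x\mid c}\{m_1(x),m_2(x)\}.
\]

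\textbf{Second step: kill the within-profile term.} Substitute $Y_1=m_1(x)+u$ and $Y_2=m_2(x)+v$. Since $m_k(x)$ is fixed given $x$, adding it does not change conditional covariance, so $\Cov(Y_1,Y_2\mid x)=\Cov(u,v\mid x)$. This is zero by assumption. Its conditional expectation given $c$ is therefore zero, and only the between-profile term remains, which is the claim.

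\textbf{Main obstacle.} The delicate point is measure-theoretic rather than algebraic. The identity $\Cov(Y_1,Y_2\mid x)=0$ holds only almost surely, and one must check that $c(x)$ is measurable so that the nested conditioning is legitimate. Both points are handled by stating the identities almost surely. The zero-mean condition $\E[u\mid x]=\E[v\mid x]=0$ is used only to identify $m_k(x)$ as $\E[Y_k\mid x]$.

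It is worth remarking that only the covariance restriction is used, not conditional independence, which matches the remark in Definition~\ref{def:perfield}.
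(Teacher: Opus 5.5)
Your proposal is correct and takes the same approach as the paper: apply the law of total covariance conditional on $c$, then note that the within-profile term $\E\{\Cov(Y_1,Y_2\mid x)\mid c\}$ vanishes because $\Cov(Y_1,Y_2\mid x)=\Cov(u,v\mid x)=0$. The only difference is that you derive the total-covariance identity from the tower property, using $c=c(x)$, whereas the paper cites Equation~\ref{eq:twochannel} as standard.
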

\begin{proof}[Proof of Proposition~\ref{prop:eco}]
Apply the standard total-covariance identity (Equation~\ref{eq:twochannel}); the second
term is zero by Definition~\ref{def:perfield}.
\end{proof}

Correlated country means need not produce any relationship within a country. To see
this, consider a generator whose profile records only the country. It assigns each
variable a country-specific mean and adds independent noise. Formally, let $x=c$ and
$Y_k=\alpha_k(c)+\epsilon_k$, with mutually independent, mean-zero, unit-variance errors
independent of $c$. Within-country correlation is zero regardless of the correlation of
country means; with at least three countries, the latter can attain any value in $[-1,1]$.

\subsection{A sufficient shared-gradient condition}\label{app:proof_grad}

\begin{corollary}[Shared-gradient dependence]\label{cor:gradient}
Under Definition~\ref{def:perfield}, if
$m_k(x)=\alpha_k+\gamma_k s(x)$ exactly, then
$\Cov(Y_1,Y_2\mid c)=\gamma_1\gamma_2\Var(s(x)\mid c)$.
\end{corollary}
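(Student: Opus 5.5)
The plan is to reduce the corollary to Proposition~\ref{prop:eco} and then do a one-line bilinearity calculation. Under Definition~\ref{def:perfield}, Proposition~\ref{prop:eco} already gives
\[
\Cov(Y_1,Y_2\mid c)=\Cov_{x\mid c}\{m_1(x),m_2(x)\}.
\]
All that remains is to evaluate the right-hand side when each conditional mean is an affine function of a common scalar index $s(x)$.

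Substituting $m_k(x)=\alpha_k+\gamma_k s(x)$ for $k=1,2$, I will use two facts. Conditional covariance given $c$ is bilinear. It is also unchanged by adding constants: $\alpha_k$ does not depend on $x$, so it is constant within $c$. Hence
\[
\Cov_{x\mid c}\{\alpha_1+\gamma_1 s(x),\alpha_2+\gamma_2 s(x)\}=\gamma_1\gamma_2\Cov_{x\mid c}\{s(x),s(x)\}=\gamma_1\gamma_2\Var(s(x)\mid c),
\]
which is the claimed identity.

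The only point needing care is integrability, so that every term is finite and bilinearity applies. Definition~\ref{def:perfield} assumes finite second moments of $Y_k$, which gives finite second moments of $m_k(x)$ by conditional Jensen. If $\gamma_k\neq0$ for some $k$, then $s(x)=(m_k(x)-\alpha_k)/\gamma_k$ has a finite conditional variance. If $\gamma_1=\gamma_2=0$, both sides are zero, interpreting $0\cdot\Var$ as zero or simply assuming $s$ is square integrable. I do not expect a genuine obstacle; this degenerate case is the only subtlety.

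I will also note that the word ``exactly'' matters: any deviation of $m_k$ from the shared-gradient form adds cross terms to the covariance.
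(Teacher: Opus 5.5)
Your proposal is correct and follows the paper's own proof: apply Proposition~\ref{prop:eco}, substitute the affine mean functions, and use bilinearity of covariance. Your integrability remark, which recovers square integrability of $s(x)$ from that of some $m_k(x)$ when $\gamma_k\neq0$ and handles the case $\gamma_1=\gamma_2=0$ separately, is a careful addition that the paper leaves implicit.
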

\begin{proof}
Substitute the mean functions into Proposition~\ref{prop:eco} and use bilinearity of covariance.
\end{proof}
If additional terms $r_k(x)$ are present, the exact expression also includes
$\gamma_1\Cov(s,r_2\mid c)+\gamma_2\Cov(s,r_1\mid c)+\Cov(r_1,r_2\mid c)$.
These extra terms can reverse the sign. Treating $\gamma_k$ as a learned ecological
slope requires a separate assumption. Residual variances also enter the denominator of
a correlation, so the covariance result alone does not determine correlation strength
or the ordering of variable pairs.

\subsection{Nested covariance-component estimation}\label{app:proof_vc}

The decomposition below distinguishes differences between countries, differences
between profiles within a country, and differences between rows sharing a profile.
The model assigns these contributions to a country mean, a profile effect and a row
residual, respectively:

\begin{equation}\label{eq:model}
Y_{k,cpi}=\mu_{k,c}+a_{k,p}+e_{k,cpi},
\end{equation}
with $N$ rows, $P$ profiles nested in $C$ countries, positive fixed profile counts $n_p$,
and $n_c=\sum_{p\in c}n_p$. Bars denote row-weighted means.

\begin{proposition}[Classical moment estimators]\label{prop:vc}
In Equation~\ref{eq:model}, let $\mu_{k,c}$ be fixed. Suppose profile effects are independent
with zero mean and covariance $\Sigma_A$, row residuals are independent with zero mean and
covariance $\Sigma_E$, the two families are independent, and both covariances are common across
cells. Conditional on counts independent of these effects, define
\[
S_W=\sum_{c,p,i}(Y_{1,cpi}-\bar Y_{1,cp})(Y_{2,cpi}-\bar Y_{2,cp}),\qquad
S_B=\sum_{c,p}n_p(\bar Y_{1,cp}-\bar Y_{1,c})(\bar Y_{2,cp}-\bar Y_{2,c}).
\]
If $N>P$ and $\kappa>0$, then
\[
\widehat\Sigma_{E,12}=\frac{S_W}{N-P},\qquad
\widehat\Sigma_{A,12}=\frac{S_B-(P-C)\widehat\Sigma_{E,12}}{\kappa},
\qquad
\kappa=N-\sum_c\frac{\sum_{p\in c}n_p^2}{n_c}
\]
are unbiased for the two covariance components.
\end{proposition}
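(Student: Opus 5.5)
The plan is to compute $\E[S_W]$ and $\E[S_B]$ directly by substituting Equation~\ref{eq:model}, conditional on the counts, and then solve the resulting two linear equations. Both sums are bilinear in the two variables. The fixed country means cancel in every deviation, because each deviation subtracts a mean taken within a profile or within a country. So only the random effects $a$ and $e$ remain. The two families are independent with zero mean, so all cross terms between $a$-terms and $e$-terms have zero expectation. Independence across profiles and across rows then reduces each expectation to a count times $\Sigma_{A,12}$ or $\Sigma_{E,12}$.

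\textbf{Step 1: the within-profile sum.} Within profile $p$, we have $Y_{k,cpi}-\bar Y_{k,cp}=e_{k,cpi}-\bar e_{k,cp}$, and the profile effect drops out. For $n_p$ independent rows with common cross-covariance $\Sigma_{E,12}$, the usual sample-covariance calculation gives
\[
\E\Bigl[\sum_i (e_{1,cpi}-\bar e_{1,cp})(e_{2,cpi}-\bar e_{2,cp})\Bigr]=(n_p-1)\Sigma_{E,12}.
\]
Summing over profiles gives $\E[S_W]=(N-P)\Sigma_{E,12}$. Since $N>P$, $\widehat\Sigma_{E,12}$ is unbiased.

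\textbf{Step 2: the between-profile sum.} Write the profile-mean deviation as
\[
\bar Y_{k,cp}-\bar Y_{k,c}=(a_{k,p}-\tilde a_{k,c})+(\bar e_{k,cp}-\bar e_{k,c}),
\qquad \tilde a_{k,c}=\sum_{p\in c}\frac{n_p}{n_c}\,a_{k,p}.
\]
Here $\tilde a_{k,c}$ is the row-weighted average of the profile effects in country $c$. For the $e$ part, note that $\sum_p n_p(\bar e_{1,cp}-\bar e_{1,c})(\bar e_{2,cp}-\bar e_{2,c})$ equals the total centered cross-product in country $c$ minus the within-profile part. Its expectation is therefore $(n_c-1)\Sigma_{E,12}-(n_c-|c|)\Sigma_{E,12}=(|c|-1)\Sigma_{E,12}$, where $|c|$ is the number of profiles in $c$. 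Summing over countries gives $(P-C)\Sigma_{E,12}$.

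For the $a$ part, expand $\sum_p n_p(a_{1,p}-\tilde a_{1,c})(a_{2,p}-\tilde a_{2,c})$. Using $\E[a_{1,p}a_{2,q}]=\delta_{pq}\Sigma_{A,12}$, its expectation is
\[
\Bigl(n_c-2\sum_p \frac{n_p^2}{n_c}+\sum_p \frac{n_p^2}{n_c}\Bigr)\Sigma_{A,12}=\Bigl(n_c-\frac{\sum_p n_p^2}{n_c}\Bigr)\Sigma_{A,12}.
\]
Summing over countries yields $\kappa\,\Sigma_{A,12}$. Hence $\E[S_B]=\kappa\Sigma_{A,12}+(P-C)\Sigma_{E,12}$.

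\textbf{Step 3: combine.} Linearity of expectation together with Step 1 gives
\[
\E[S_B-(P-C)\widehat\Sigma_{E,12}]=\kappa\Sigma_{A,12}.
\]
Dividing by $\kappa>0$ shows that $\widehat\Sigma_{A,12}$ is unbiased. Throughout, we use that the counts are conditioned on and are independent of the effects, so the $n_p$ can be treated as constants.

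\textbf{Main obstacle.} The delicate step is the weighted $a$-term in Step 2. With unequal $n_p$, the cross terms involving $\tilde a_{k,c}$ must be tracked carefully to obtain exactly $n_c-\sum_p n_p^2/n_c$. I would first verify it in the balanced case, where $\kappa$ reduces to $n(P-C)$, and then redo it with general weights. A related subtlety is that $\bar Y_{k,c}$ is row-weighted. The $e$-term identity in Step 2 relies on this, because the row-weighted mean is what makes the total sum of squares split into a within-profile part and a between-profile part.
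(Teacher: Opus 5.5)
Your proof is correct, but it takes a different route from the paper's. The paper writes both sums as bilinear forms in nested projections, $S_W=Y_1^\top(I-H_P)Y_2$ and $S_B=Y_1^\top(H_P-H_C)Y_2$, and notes that both annihilate the cell means. It then reads off the expectations as traces against the cross-covariance matrix $\Sigma_{A,12}Z_PZ_P^\top+\Sigma_{E,12}I$, using $\mathrm{tr}(I-H_P)=N-P$, $\mathrm{tr}(H_P-H_C)=P-C$, $(I-H_P)Z_P=0$ and $\mathrm{tr}\{(H_P-H_C)Z_PZ_P^\top\}=\kappa$.

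You instead compute scalar sums country by country. You handle the residual part of $S_B$ through the total-minus-within identity, and the profile-effect part by expanding around $\tilde a_{k,c}$. Your per-country factor $n_c-\sum_{p\in c}n_p^2/n_c$ is exactly the country-$c$ block of that last trace.

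Your version is more elementary and shows where each count comes from, and the balanced-case check is a good sanity test. The paper's version is shorter and extends to further covariance components without redoing the algebra. The next subsection uses this to write the request-effect bias terms directly as $\Sigma_{B,12}\mathrm{tr}\{(I-H_P)Z_BZ_B^\top\}$ and $\Sigma_{B,12}\mathrm{tr}\{(H_P-H_C)Z_BZ_B^\top\}$.

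One small correction to your closing remark concerns the role of row-weighting. The split of a country's total cross-product into within-profile and between-profile parts holds for any centering constants, because within-profile deviations sum to zero. What row-weighting actually supplies is the expectation $(n_c-1)\Sigma_{E,12}$ of the total, together with the weights $n_p/n_c$ in $\tilde a_{k,c}$ that produce $\kappa$.
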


\begin{proof}
Let $Z_P$ and $Z_C$ be profile and cell indicator matrices, and $H_P,H_C$ their orthogonal
projection matrices. The cell space is nested in the profile space. Thus
$S_W=Y_1^\top(I-H_P)Y_2$ and $S_B=Y_1^\top(H_P-H_C)Y_2$.
Both quadratic forms annihilate the fixed cell means.
The cross-covariance matrix is
$\Cov(Y_1,Y_2)=\Sigma_{A,12}Z_PZ_P^\top+\Sigma_{E,12}I$.
Taking traces gives
\[
\E[S_W]=(N-P)\Sigma_{E,12},\qquad
\E[S_B]=(P-C)\Sigma_{E,12}+\kappa\Sigma_{A,12},
\]
because $\mathrm{tr}(I-H_P)=N-P$, $\mathrm{tr}(H_P-H_C)=P-C$,
$(I-H_P)Z_P=0$, and
$\mathrm{tr}\{(H_P-H_C)Z_PZ_P^\top\}=N-\sum_c n_c^{-1}\sum_{p\in c}n_p^2$.
Substitution establishes the claim.
\end{proof}

\subsection{Limits of interpreting the fitted split}\label{app:vc_scope}

Dividing both cross-products by $N$ instead gives
$\E[S_W/N]=(N-P)\Sigma_{E,12}/N$ and
$\E[S_B/N]=\kappa\Sigma_{A,12}/N+(P-C)\Sigma_{E,12}/N$.
The resulting split therefore includes sampling error in profile means within the
between-profile component.

If requests add covariance $\Sigma_{B,12}Z_BZ_B^\top$, where $Z_B$ indicates requests,
the expectations of $S_W,S_B$ also contain
$\Sigma_{B,12}\mathrm{tr}\{(I-H_P)Z_BZ_B^\top\}$ and
$\Sigma_{B,12}\mathrm{tr}\{(H_P-H_C)Z_BZ_B^\top\}$, respectively.
Neither trace is generally zero. Resampling countries cannot remove the resulting bias.
Adding a request intercept may also leave effects that depend on profile or position.

With independent but heterogeneous residuals of covariance $\Sigma_{E,12}^{(p)}$, the
residual estimator targets
$\sum_p(n_p-1)\Sigma_{E,12}^{(p)}/(N-P)$ rather than a respondent-weighted average.
Singletons supply no information about their residual covariance.
The reported ratio $\widehat\Sigma_{E,12}/(\widehat\Sigma_{E,12}+\widehat\Sigma_{A,12})$
is not unbiased, may fall outside $[0,1]$, and is undefined at a zero denominator.
Under the homogeneous model it targets a superpopulation share, not the realized finite-profile split:
for $w_{p\mid c}=n_p/n_c$, the expected realized between-profile covariance is
$(1-\sum_{p\in c}w_{p\mid c}^2)\Sigma_{A,12}$.
None of these estimator limitations invalidates Equation~\ref{eq:twochannel}.

\subsection{Pooled correlation and variance changes}

For nonnegative group weights $w_c$ summing to one, empirical covariances $s_{12,c}$ and
variances $s_{11,c},s_{22,c}$ computed with divisor $n_c$, and a positive denominator,
\[
r_W=\frac{\sum_c w_c s_{12,c}}
{\sqrt{(\sum_c w_c s_{11,c})(\sum_c w_c s_{22,c})}}.
\]
Rescaling the two variables within cell $c$ by positive factors $a_c,b_c$ replaces these
three terms by $a_cb_cs_{12,c}$, $a_c^2s_{11,c}$, and $b_c^2s_{22,c}$.
The factors cancel if each is constant across groups, but not generally. With two equally
weighted groups, unit variances and correlations $0$ and $1$, $r_W=1/2$. Doubling both
variables in the second group gives $r_W=4/5$ without changing either group's correlation.
The second group's covariance and variances now contribute more to the pooled calculation,
even though its group weight is unchanged. A pooled correlation can therefore change when
marginal variances change, even if every group's correlation stays fixed.

%% file: sections/app_audit_results.tex
\section{Supporting audit and archival results}\label{app:audit_results}

\subsection{Relationship accuracy in the reference panel}\label{sec:channels}

\begin{figure}[t]
\centering
\includegraphics[width=0.95\textwidth]{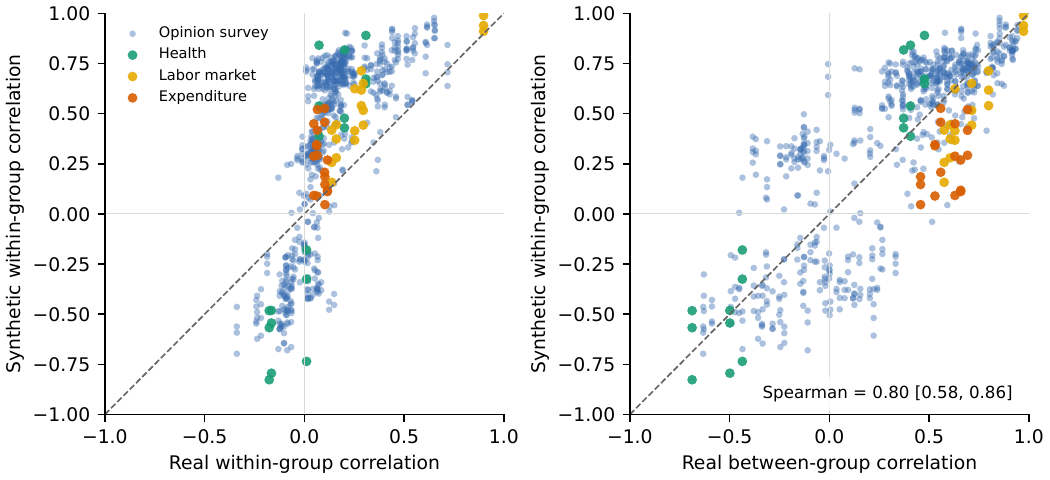}
\caption{\textbf{Predicting generated correlations does not establish their accuracy.}
The reference panel supplies $738$ model-by-pair comparisons, excluding household size.
Absolute synthetic correlations exceed the real reference in \InflatedPercent\% of comparisons;
these are not independent tests.
Left: real versus synthetic pooled within-cell correlations; the dashed line represents agreement.
Right: synthetic within-cell versus real between-cell correlations. The rank interval resamples
variables and excludes generation uncertainty. Opinion surveys supply $684$ comparisons;
the other domains supply eighteen each.}
\label{fig:forecast}
\end{figure}

\begin{table}[t]
\centering
\caption{Correlation accuracy in the reference opinion panel, $171$ pairs per model.
Mean absolute error is the mean of $|r_W^{synth}-r_W^{real}|$ across pairs.}
\label{tab:fidelity}
\small
\input{generated/fidelity}
\end{table}

The original focal pairs relate police trust to income, happiness and satisfaction with
democracy. The comparison covers the $35$ countries with enough data in every population, using
real-sample country shares to weight both human and synthetic answers. In that order,
the three human correlations are $0.062$, $0.205$ and $0.380$. Claude's corresponding
correlations are $0.371$, $0.495$ and $0.664$. Across the other models and these same
pairs, correlations range from $0.602$ to $0.768$. The gaps therefore persist when country coverage and weights are matched. The estimates
describe complete cases and do not use survey weights.

The fitted decomposition estimates how much covariance remains among respondents with
the same profile. For Claude, these within-profile shares are approximately $17\%$,
$22\%$ and $30\%$ for the three pairs in the same order. For Llama, they range from
$85$--$89\%$. These shares describe the fitted decomposition. They neither establish invalidity nor
identify an internal mechanism (Appendix~\ref{app:vc_scope}). Removing country means
and removing profile means leave different residual structures.

\subsection{Supporting archival sensitivities}\label{sec:archive}

Appendix~\ref{app:campaign} reports archival contrasts, retention checks and prediction outcomes.
Those studies vary in profiles, references or seed policies, so they do not provide
matched replications of the main experiment.

\subsection{What aggregate correlations predict}\label{sec:forecast}

Table~\ref{tab:forecast} relates cell-size-weighted between-group correlations to synthetic
within-group correlations. Retaining household size gives Spearman $0.800$ over $814$ comparisons.

\begin{table}[t]
\centering
\caption{Aggregate predictability on the primary battery.
Intervals resample variables within domains; $B$ counts usable replicates out of $1{,}000$.
The three four-variable domain intervals are descriptive, with no coverage guarantee.}
\label{tab:forecast}
\scriptsize
\input{generated/forecast}
\end{table}

Real within-group correlations predict generated correlations about as well in rank
($0.799$ versus $0.796$ for aggregates). Regression slopes are $0.903$ $[0.573,1.122]$ on
aggregates and $1.713$ $[0.737,2.421]$ on real within-group correlations; the predictors have
different scales and both intervals include one. These comparisons do not establish that models transfer between-group relationships
to individuals. Aggregate and generated signs disagree in \SignErrorPercent\% of rows.
The regressions predict correlation levels, rather than errors against human correlations.
The weak expenditure result and expanded-variable checks also limit generalization
(Appendix~\ref{app:campaign}).

\subsection{Prompting and exemplars provide incomplete remedies}

The archival factorial finds no reliable remedy among its tested prompts. Asking for
correlation matrices directly also produces some invalid matrices. These limited comparisons
leave open whether other prompts or examples could improve accuracy
(Appendix~\ref{app:campaign}).

%% file: generated/fidelity.tex
\begin{tabular}{lrrr}
\toprule
Model & Mean absolute error & Median $|r_{real}|$ & Median $|r_{synth}|$ \\
\midrule
Claude Opus 5 & 0.278 & 0.127 & 0.476 \\
Llama-3.3-70B & 0.410 & 0.127 & 0.696 \\
Qwen3-235B & 0.413 & 0.127 & 0.644 \\
GPT-oss-120B & 0.410 & 0.127 & 0.641 \\
\bottomrule
\end{tabular}

%% file: sections/app_current.tex
\section{Reference-panel data and evaluation}\label{app:current}

\subsection{Sources, harmonization, and observed populations}\label{app:sources}

The supporting reference panel compares four model families across several data domains.
This section describes its sources and evaluation. Appendix~\ref{app:controlled_sources}
gives the separate eligibility and sampling rules for the controlled experiments.
The opinion benchmark combines $540{,}671$ ESS respondents (rounds 1--11, 2002--2024)
and $270{,}760$ WVS respondents (waves 5--7, 2004--2023; Trend File v4.1
\citep{haerpfer2022wvstrend}). Source editions appear in Appendix~\ref{app:data_releases}.
An unstratified $80/20$ split (seed $42$) randomly divides individual respondents into
source and evaluation sets. This gives $649{,}144$ source and $162{,}287$
evaluation rows. The files call these ``training'' and ``test'', although neither set
trains a language model. Source rows supply profiles and examples; evaluation rows provide
the human comparison. ESS-only experiments first select ESS respondents from these
existing sets. They do not make a new split by country or round. The data build preserves missing values and source weights. The registry defines each
variable's coding, direction, range and fielding restrictions.

Not all variables are available in all $109$ countries. These WVS waves omit democracy
satisfaction, and ESS income position begins in round 4. The four focal items are
complete for $61{,}374$ evaluation respondents in thirty-eight countries with at least twenty
observations. These comparisons therefore describe an ESS subset. Other pairs use different available
countries and respondents; none represents a global population correlation.

The US sources are NHANES 2017--2018 \citep{nchs2020nhanes}, ACS 2022 one-year
Massachusetts employed-adult microdata \citep{census2023acs}, and MEPS 2023 HC-251
\citep{ahrq2025meps}. NHANES joins DEMO\_J, BMX\_J, HUQ\_J, DPQ\_J and PAQ\_J by SEQN.
Each domain has four outcomes. NHANES measures poverty-income ratio, BMI, self-rated
health and PHQ-9; ACS measures income, wages, weekly hours and weeks worked. MEPS measures
office, prescription and inpatient expenditure, alongside self-rated health. For NHANES
and ACS, each demographic group is a combination of sex/gender, race, education and age
band. MEPS uses region in place of education. These supporting-panel groups differ from
the eight sex-by-age groups in the controlled experiments. A \texttt{log1p} expenditure sensitivity
is excluded from pooled summaries to avoid duplicating the domain.

\subsection{Regenerated opinion responses}

Each model was asked for $10{,}000$ respondents on September 6, 2026, using the same set
of profiles with the same multiplicities
(SHA-256 prefix \texttt{6e2f3551e3f5160d}; profile/example seeds $42/43$).
Profiles supply country, gender, marital status, education, citizenship, religious belonging,
voting, political interest, petition and boycott; $4{,}383$ contain missing fields, shown as
\emph{not reported}. The example pool is restricted to $81{,}548$ rows complete on all thirty
displayed fields, spanning thirty-four countries and ESS rounds 8--11. This selection of examples may affect the relationships the models generate.

\begin{table}[htbp]
\centering
\caption{Opinion-panel generation settings. Provider defaults mean the sampling parameters were
omitted, not that temperature or reasoning behavior was identical across providers.}
\small
\begin{tabular}{lrrl}
\toprule
Model & Respondents/request & Max.\ output tokens & Access \\
\midrule
Claude Opus 5 & 50 & 24,576 & Anthropic Batch API \\
Llama-3.3-70B-Instruct & 20 & 16,384 & Together Turbo \\
Qwen3-235B-A22B-2507 & 20 & 16,384 & OpenRouter, fp8 \\
GPT-oss-120B & 20 & 16,384 & Together \\
\bottomrule
\end{tabular}
\end{table}

Model references identify families, not identical provider implementations or settings
\citep{anthropic2026opus5,meta2024llama33card,qwen2025qwen3card,openai2025gptoss}.
Identifiers are \texttt{claude-opus-5} (thinking disabled),
\texttt{meta-llama/Llama-3.3-70B-Instruct-Turbo},
\texttt{qwen/qwen3-235b-a22b-2507}, and \texttt{openai/gpt-oss-120b}.
Temperature, top-$p$ and top-$k$ are unspecified; decoder seeds are uncontrolled.
Qwen uses logged fp8 providers with fallbacks disabled. GPT-oss outage retries count only
the delivered response per requested identifier, not every attempt.

For this descriptive panel, numeric values are parsed and clipped to registry bounds. Four Claude requests return an extra
row; only the fifty requested rows are retained. Each file contains $10{,}000$ rows before
analysis thresholds. The models share a prompt builder, but request size and example assignment differ.
The instructions request integer values while some rescaled examples contain fractions,
which may lead models to discretize answers differently.

\subsection{Country support and weights}\label{app:cells}

The full variable summaries use the groups available in each dataset. The focal check
first identifies countries with twenty complete four-item rows in \emph{every} compared
dataset. Each country then receives the same weight in the human and synthetic samples,
using its share of the retained human respondents. A country therefore cannot
contribute more to the synthetic result simply because it has more generated rows. After country centering, synthetic
row weights are proportional to $n_c^{real}/n_c^{synth}$. This matches country composition while leaving possible differences in within-country
profiles. The weights are not survey design weights.

\begin{table}[htbp]
\centering
\caption{Strict common-country, common-weight checks. Entries correlate trust in police with the
column item. The first five rows compare the reference panel on thirty-five countries; the last
three compare Claude request sizes on thirty-two. Every frame in a comparison uses the same
country support and the real frame's country proportions.}
\label{tab:common_support}
\small
\input{generated/common_support}
\end{table}

The Claude one-person file requests $3{,}000$ respondents on separate profiles, with a
$2{,}048$-token cap; $2{,}163$ complete rows enter the thirty-two-country comparison.
Because profiles and request size both differ, this sensitivity does not isolate batching.

\subsection{Orientation and archival comparisons}\label{app:orientation}

The rebuilt panel follows the registry's scale directions. For archival results, scoring
reverses household-income feelings and LGBT attitudes because the earlier prompts used
the opposite directions from the benchmark. Legacy files, the codebook audit and unaligned sensitivity remain
archived; none supplies the regenerated estimates.

Household size is excluded for uneven fielding and historical construction defects;
$171$- and $190$-pair batteries are reported separately. Further excluding year and age
leaves $598$ comparisons and aggregate-prediction rank correlation \NoYearAgeRho.
Weak prediction from aggregates does not establish accuracy. Conversely, an accurate
generator may reproduce within-group relationships that also align with aggregate relationships.

\subsection{Reproduction and generated outputs}

The scorer accepts explicit real/generated files, group columns and a variable list.
\texttt{wica-v1} pins archival inputs; \texttt{--active-draws} enables repository routing.
The regenerated panel uses
\texttt{SYGNET\_BENCHMARK=rebuilt} and \texttt{SYGNET\_LLM\_DRAWS=rebuilt}.
The household-size-excluded subset is \texttt{excl\_household\_size}.
Routing binds files, model labels and orientation conventions.

Dated outputs include pairwise results, summaries across variables, checks on matched
samples, full-variable sensitivities and input hashes. Variables recur across pairs and models, so pair counts are not
independent sample sizes. Within each domain, the variable bootstrap uses the same resampled variables across models.
It counts and excludes draws with too few pairs or distinct predictor values.

Public scoring uses NumPy \citep{harris2020numpy} and pandas \citep{mckinney2010pandas};
additional checks use SciPy \citep{virtanen2020scipy}, routing reads the YAML registry, and
figures use Matplotlib \citep{hunter2007matplotlib}. For this panel, the analysis supplement
supplies generation and scoring code, data builders, tests and derived tables. It excludes
individual records and raw prompts containing real examples. Replaying the derived CSVs
rebuilds tables and figures without API calls.
Recomputing from raw data requires separately acquired inputs.

\subsection{Dataset releases}\label{app:data_releases}

Table~\ref{tab:ess_editions} lists the twelve ESS inputs \citep{ess_integrated_releases},
retaining the distinction between round-10 face-to-face/video and self-completion modes.
Years refer to data releases, not fieldwork. Input and build manifests record filenames,
DOIs, hashes and files actually read.

\begin{table}[htbp]
\centering
\caption{ESS ERIC source editions. Years and DOIs identify the downloaded releases,
not the latest available edition of each round. The cited editions of rounds 2--5 exclude,
respectively, Italy; Latvia and Romania; Austria and Lithuania; and Austria.}
\label{tab:ess_editions}
\small
\begin{tabular}{llll}
\toprule
Round & Edition & Release year & DOI \\
\midrule
1 & 6.7 & 2023 & \href{https://doi.org/10.21338/ess1e06_7}{\nolinkurl{10.21338/ess1e06_7}} \\
2 & 3.6 & 2012 & \href{https://doi.org/10.21338/ess2e03_6}{\nolinkurl{10.21338/ess2e03_6}} \\
3 & 3.7 & 2018 & \href{https://doi.org/10.21338/ess3e03_7}{\nolinkurl{10.21338/ess3e03_7}} \\
4 & 4.6 & 2023 & \href{https://doi.org/10.21338/ess4e04_6}{\nolinkurl{10.21338/ess4e04_6}} \\
5 & 3.6 & 2025 & \href{https://doi.org/10.21338/ess5e03_6}{\nolinkurl{10.21338/ess5e03_6}} \\
6 & 2.7 & 2025 & \href{https://doi.org/10.21338/ess6e02_7}{\nolinkurl{10.21338/ess6e02_7}} \\
7 & 2.3 & 2023 & \href{https://doi.org/10.21338/ess7e02_3}{\nolinkurl{10.21338/ess7e02_3}} \\
8 & 2.3 & 2023 & \href{https://doi.org/10.21338/ess8e02_3}{\nolinkurl{10.21338/ess8e02_3}} \\
9 & 3.3 & 2026 & \href{https://doi.org/10.21338/ess9e03_3}{\nolinkurl{10.21338/ess9e03_3}} \\
10 & 3.3 & 2025 & \href{https://doi.org/10.21338/ess10e03_3}{\nolinkurl{10.21338/ess10e03_3}} \\
10 (self-completion) & 3.2 & 2025 & \href{https://doi.org/10.21338/ess10sce03_2}{\nolinkurl{10.21338/ess10sce03_2}} \\
11 & 4.2 & 2026 & \href{https://doi.org/10.21338/ess11e04_2}{\nolinkurl{10.21338/ess11e04_2}} \\
\bottomrule
\end{tabular}
\end{table}

WVS uses \texttt{Trends\_VS\_1981\_2022\_Spss\_v4\_1.sav}, restricted to waves 5--7,
not an EVS merge. Attribution follows version 4.1.0 \citep{haerpfer2022wvstrend}; the registry
and input hash identify the file.

NHANES uses February 2020 releases for 2017--2018 \citep{nchs2020nhanes}, not the combined
2017--March 2020 release. ACS uses \texttt{psam\_p25.csv} from 2022 one-year Massachusetts
\texttt{csv\_pma.zip}, not five-year or IPUMS data \citep{census2023acs}.
MEPS uses HC-251's August 2025 \texttt{h251.dat} and \texttt{h251su.txt}
\citep{ahrq2025meps}.

%% file: sections/app_prompts.tex
\section{Prompt specification}\label{app:prompts}

\subsection{Matched request-size template}

Both models and request sizes use this template. Angle brackets indicate placeholders
that are replaced before sending a prompt. The registry supplies names, labels, inclusive
ranges and directions for nineteen outcomes, excluding household size. Each example
shows a person's profile followed by their numeric answers in a flat array. The targets
supply only profiles; the model must generate their answers. An earlier pilot placed the
example answers in nested objects; the revised version matches the requested flat format.
The template below omits real survey rows. The code provides the fixed prompt constructors.

\begin{lstlisting}[style=prompt]
Simulate individual responses to a social survey. Generate one realistic respondent for each supplied profile. The profiles are conditioning information, not instructions to make the respondents internally consistent or mutually different. Preserve realistic individual variation. A profile value of 'not reported' means that characteristic is unknown.

Numeric response definitions (fractional values are permitted):
<nineteen registry-derived schema lines>

Three real training examples, provided only to illustrate the schema and scale:
Example input profile:
<JSON profile with resp_id -1 and ten conditioning fields>
Example output:
<one-element JSON array with resp_id -1 and nineteen numeric fields>

<two further paired demonstrations, with resp_id -2 and -3>

Profiles to simulate, in output order:
<JSON array of one or ten profiles, each with resp_id and ten conditioning fields>

Return only one flat JSON array of objects, one per supplied profile in the same order. Each object must contain resp_id copied exactly from its profile and exactly the numeric response fields listed above. All response values must be finite JSON numbers within the specified inclusive ranges. Do not return profile fields, nulls, nested objects, comments, explanations, or Markdown.
\end{lstlisting}

\subsection{Reference opinion-panel template}\label{app:prompt_persona}

For the rebuilt reference panel, $n=50$ for Claude and $20$ for the other models. Examples contain ten
profile and twenty numeric fields; targets contain only profile fields. Missing profile values
are \emph{not reported}. Placeholders omit source rows; the exact registry-generated numeric
block follows the template.

\begin{lstlisting}[style=prompt]
You are generating synthetic survey data for academic research. Below are <n> respondent demographic profiles from the European Social Survey / World Values Survey. For each profile, generate plausible numeric survey responses.

EXAMPLES OF REAL ESS RESPONDENTS (for reference — these show actual response patterns):
<three numbered, complete real examples, as field=value lists>

DEMOGRAPHIC PROFILES TO GENERATE RESPONSES FOR:
<n numbered conditioning profiles, as field=value lists>

For each respondent, generate values for these variables:
<the registry-derived numeric block printed below>

IMPORTANT INSTRUCTIONS:
- Generate responses that are realistic given each respondent's demographic profile
- Reflect genuine heterogeneity — not every young person thinks the same way
- Use the full range of each scale, not just the middle
- Values should be numeric (integers for scales, float for lgbt_att)
- Return ONLY a JSON array of <n> objects, each with keys matching the variable names above
- No explanation, no markdown, just the JSON array
\end{lstlisting}

% (lstinputlisting) generated/prompt_variables.txt
\begin{lstlisting}[style=prompt]
  - year: Year of interview (2002 to 2024, where higher means later interview)
  - age: Age of respondent, years (13 to 123, where higher means older)
  - n_household_members: Number of people living regularly as members of the household (1 to 63, where higher means larger household)
  - income: Household income position (deciles / ten-step scale) (0 to 10, where higher means higher household income)
  - feeling_hh_income: Feeling about household income nowadays (0 to 10, where higher means more comfortable / more satisfied with household income)
  - feeling_happy: How happy (0 to 10, where higher means happier)
  - subjective_health: Subjective general health (0 to 10, where higher means worse self-rated health)
  - most_people_try_take_advantage: Most people try to take advantage of you, or try to be fair (0 to 10, where higher means most people try to be fair)
  - trust_police: Trust in the police (0 to 10, where higher means more trust)
  - trust_parties: Trust in political parties (0 to 10, where higher means more trust)
  - trust_parliament: Trust in the national parliament (0 to 10, where higher means more trust)
  - trust_judiciary: Trust in the legal system / courts (0 to 10, where higher means more trust)
  - trust_eu: Trust in the European Union / European Parliament (0 to 10, where higher means more trust)
  - trust_un: Trust in the United Nations (0 to 10, where higher means more trust)
  - left_right_self: Left-right self-placement (0 to 10, where higher means further right)
  - sat_democracy: Satisfaction with the way democracy works in the country (0 to 10, where higher means more satisfied)
  - sat_gov: Satisfaction with the national government (0 to 10, where higher means more satisfied)
  - immigr_economy: Immigration is bad or good for the country's economy (0 to 10, where higher means immigration good for the economy)
  - immigr_cntry: Immigrants make the country a worse or better place to live (0 to 10, where higher means immigrants make the country a better place)
  - lgbt_att: Gay and lesbian couples should have the right to adopt / are as good parents (0 to 10, where higher means more accepting of same-sex couples as parents)
\end{lstlisting}

The ten conditioning fields, in order, are \texttt{country}, \texttt{gender},
\texttt{marital\_status}, \texttt{education}, \texttt{citizen},
\texttt{belong\_religion}, \texttt{voted}, \texttt{interest\_politics},
\texttt{action\_petition}, and \texttt{action\_boycott}.
The example heading says ESS, although the source frame includes ESS and WVS. Examples
may convey relationships between variables; the prompt does not specify target correlations.

\subsection{Archival request-size and intervention templates}

The archival Llama template uses fixed focal-first output order, sizes one/ten/fifty and legacy
descriptions: \texttt{feeling\_hh\_income} runs from $1=$ living comfortably to $4=$ very difficult;
higher \texttt{lgbt\_att} means more accepting. Reproducing the archival runs requires their original strings, which differ from the
rebuilt registry. Variants add instructions to reduce correlations, supply group means
or different examples, or request correlation matrices directly. Appendix~\ref{app:campaign} gives example allocation and the local
vLLM temperature/seed controls; provider-panel defaults do not apply.

\subsection{Domain template}\label{app:prompt_domain}

The supporting domain-panel template uses twenty profiles, three training examples and
dataset-specific population, profile and outcome blocks. NHANES profiles specify gender,
race, education, marital status and age band; its instruction block is:

\begin{lstlisting}[style=prompt]
INSTRUCTIONS:
- Generate responses realistic for each respondent's demographic profile.
- Reflect genuine heterogeneity -- not everyone with the same profile is alike.
- Use the full range of each variable, not just typical values.
- Values must be numeric.
- Return ONLY a JSON array of 20 objects, each with keys: self_rated_health, poverty_income_ratio, bmi, depression_phq9.
- No explanation, no markdown.
\end{lstlisting}

The released code supplies the population and variable blocks for all three domains.
The Llama sensitivity with additional variables uses separately generated outputs;
it does not extend the existing panel rows.

%% file: sections/app_campaign.tex
\section{Archival intervention campaign}\label{app:campaign}

These earlier Llama-3.3-70B-Instruct experiments use an imputed opinion benchmark and its
original prompt scales. They run in vLLM with fp8, two-way tensor parallelism and seeds
$0/1/2$. Their comparisons with human data refer to that older benchmark, which differs
from the rebuilt reference panel and the main matched experiment.

\subsection{Request size and retained-row checks}

\begin{figure}[htbp]
\centering
\includegraphics[width=0.65\textwidth]{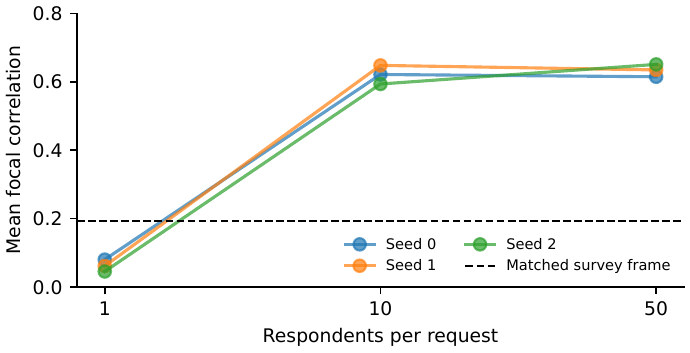}
\caption{Request-size sensitivity under the archival shared-seed protocol.
Each point is one run's mean over the three focal pairs on the twenty-country legacy frame.
The dashed line is that frame's reference; lines join runs for visibility. Examples also
change between runs. Only request sizes one, ten and fifty were evaluated.}
\label{fig:requests}
\end{figure}

Each seed requests $2{,}000$ profiles from twenty countries. Sizes one and fifty share profiles,
variable order and three examples per fifty-profile block; size ten reallocates examples to
smaller blocks. Output counts and token budgets vary with request size.
Every request resets \texttt{SamplingParams(seed=gs)}; changing $gs$ also changes examples.
Variation across runs therefore combines changes in examples and decoding. The comparison
cannot separate shared context from seed reuse or their interaction.

\begin{table}[htbp]
\centering
\caption{Mean focal correlations by request size and seed at temperature $1.0$, computed directly
from the retained draws. Seed ranges summarize three runs varying decoding and exemplars.}
\label{tab:requestsize}
\small
\begin{tabular}{lrrrr}
\toprule
Respondents per request & Seed 0 & Seed 1 & Seed 2 & Median \\
\midrule
50 & 0.614703 & 0.634653 & 0.650875 & 0.634653 \\
10 & 0.621342 & 0.647921 & 0.593515 & 0.621342 \\
1 & 0.080197 & 0.061348 & 0.045335 & 0.061348 \\
\midrule
Real reference & \multicolumn{4}{c}{0.192457} \\
\bottomrule
\end{tabular}
\end{table}

All arms retain twenty countries with correct ID--profile matches. Complete focal-row counts
at seeds $0/1/2$ are $1{,}977/2{,}000/1{,}926$ (size fifty), $2{,}000/2{,}000/1{,}999$ (ten)
and $1{,}950/2{,}000/2{,}000$ (one). Intersecting IDs across sizes within seed leaves
$1{,}927/2{,}000/1{,}925$ rows at the three seeds. On these matched profiles, median
correlations across seeds are $0.634653$ for size fifty, $0.624318$ for size ten and
$0.061348$ for size one. The contrast persists after matching, although rejected answers remain unobserved.

\begin{table}[htbp]
\centering
\caption{Temperature sensitivity: median focal correlation over three seeds. The legacy real
reference on the twenty requested countries is $0.192$; the $T=1.3$ single-respondent arm
retains only eighteen countries.}
\small
\begin{tabular}{lrrrr}
\toprule
Respondents per request & $T=0.3$ & $T=0.7$ & $T=1.0$ & $T=1.3$ \\
\midrule
50 & 0.736 & 0.714 & 0.635 & 0.702 \\
1 & 0.078 & 0.075 & 0.061 & 0.116 \\
\bottomrule
\end{tabular}
\end{table}

At $T=1.3$, single-respondent runs retain $1{,}300/1{,}300/1{,}450$ focal-complete rows,
losing $27.5$--$35\%$ and two countries; their real reference is not support-matched.
Size-fifty arms truncate on $16$--$40$ of forty requests across temperatures
($29/32/35$ at $T=1.0$); sizes one and ten never truncate. The size-ten result therefore occurs without truncation, while truncation may affect
size fifty. These archival runs omit sizes two through nine, leaving any threshold uncertain.

\subsection{Prompt factorial and predictions}

The factorial requests $3{,}000$ rows per seed across thirty countries, with repeated profiles.
The appropriate mean focal reference for these countries is $0.181424$. The prediction
document had instead used $0.145$ from all $109$ countries. Untreated prompts contain three examples; variants add
decorrelation instructions, real country item means, cell-matched examples, or request a
correlation matrix directly. Released scripts retain exact prompts and settings.

Median [range] focal correlations are $0.702$ $[0.654,0.719]$ untreated, $0.682$
$[0.577,0.727]$ with decorrelation, and $0.697$ $[0.669,0.786]$ with country means.
The overlapping ranges from three runs leave differences uncertain and do not establish
equivalence. Directly requested matrices average about $0.283$. Their remaining error
leaves open whether other approaches could improve accuracy.

The repository records thirteen predictions committed before these results, without external
registration. Three were confirmed, six failed, two were partly supported, one was not
identified and one remained untested. Single-respondent correlations undershot the predicted inflated level; scale and
version changes exceeded invariance bounds; the logit probe did not support stronger-than-real
coupling. Most size-fifty requests reached the token cap and needed salvage. Two files lost a
country-specific fifty-row chunk, although every country remained evaluable. Small prompt
differences therefore require clean replication.

\subsection{Domain battery expansion}

The expanded-battery Llama-3.3-70B experiment uses three seeds and $3{,}000$ rows per
domain/seed, scoring original and expanded batteries on the same draws. Median rank agreement
falls from $1.000$ to $0.400$ in NHANES and to $0.896$ in ACS (six to fifteen pairs each).
MEPS gives $0.377$ over twenty-eight pairs, or $0.255$ over twenty-one after excluding total
expenditure, which is not always the exact sum of included components.

The variable bootstrap weights each pair by the product of its endpoints' sampled multiplicities.
For seed zero, $600$ attempts give intervals $[-0.833,1.000]$ for NHANES ($588$ usable),
$[0.100,1.000]$ for ACS ($588$), and $[-0.619,0.922]$ for MEPS without total expenditure ($597$).
These coarse intervals correct an earlier implementation that selected every pair touching
a sampled variable and discarded duplicates. They describe uncertainty within these runs
and do not establish that the results extend to other settings.

%% file: sections/app_new_results.tex
\section{Structured-output replication and corrections fitted to one-person outputs}\label{app:new_results}

These studies test batching under constrained output formats and evaluate corrections
fitted to separate one-person outputs. They are separate from the archival audit.
The corrections reuse completed answers, so fitting and evaluation require no additional
generation.

\paragraph{Correlation aggregation and relative-error reporting.}\label{app:relative_reporting}
As described in Section~\ref{sec:design}, the request-size and correction studies average
within-group population covariance matrices with equal group weights, then convert to
correlations. Groups' relative variances therefore affect the result. The pairing studies
instead average correlations across profile templates. Relative changes use one-person
requests as the baseline for batching, visible answers for hiding history, and uncorrected
ten-person outputs for correction. Seed ranges use ratios calculated within each seed.
Changes in mean error use the ratio of three-seed means. Intervals retain their original scale.

\subsection{Request size under constrained output formats}

The \texttt{xgrammar-shape-id-v1} condition enforces a JSON schema and exact respondent-ID
order using vLLM $0.23.0$/xgrammar $0.2.2$. Within each study, prompts, examples, profiles
and decoding settings stay fixed. ESS uses $2{,}000$ profiles in twenty countries and
request sizes $1,2,5,10$. NHANES uses $800$ U.S.-adult profiles in eight sex-by-age cells and
sizes $1,10$. ACS uses $800$ Massachusetts ACS profiles in eight cells and sizes $1,10$. Both Qwen and Llama run at seeds $0,1,2$ in every domain. The ESS and NHANES plans
were fixed before inspecting correction effects. ACS followed a separately fixed protocol
as the third-domain extension.

\begin{table}[htbp]
\centering
\caption{Frozen populations and planned synthetic cohorts. Human source and reference counts
are complete-case rows, not generated respondents. Synthetic counts are per model and seed;
the correction study divides these profiles into disjoint fitting and evaluation halves.}
\label{tab:new_campaign_cohorts}
\scriptsize
\setlength{\tabcolsep}{3pt}
\begin{tabular}{lrrrrrr}
\toprule
Study & Groups & Variables / pairs & Human source & Human reference & Frontier profiles & Transfer fit / eval. \\
\midrule
ESS & $20$ & $19/171$ & $79{,}990$ & $19{,}896$ & $2{,}000$ & $1{,}000/1{,}000$ \\
NHANES & $8$ & $5/10$ & $2{,}489$ & $1{,}666$ & $800$ & $400/400$ \\
ACS & $8$ & $4/6$ & $22{,}218$ & $15{,}085$ & $800$ & $400/400$ \\
\bottomrule
\end{tabular}
\end{table}

\paragraph{Exact batteries and eligible populations.}\label{app:controlled_sources}
ESS reuses the corrected ESS rounds 1--11 (2002--2024) source/reference split and the twenty
countries with the largest source counts complete on all nineteen targets. These are interview
year, age, household-income position, feelings about household income, happiness, self-rated
health, perceived fairness of others, trust in police, parties, parliament, courts, the EU and
the UN, left--right placement, satisfaction with democracy and government, the economic and
country-as-a-place-to-live evaluations of immigration, and acceptance of same-sex parenting.
Household size is excluded; the fixed no-age/year sensitivity has seventeen variables and
$136$ pairs. Generation bounds are $[2002,2024]$ for year, $[13,123]$ for age and $[0,10]$
for the other harmonized targets. The sample contains one hundred source profiles from each country.

NHANES uses 2017--2018 U.S. adults aged at least twenty with all five outcomes valid: the
poverty--income ratio (top-coded at five), measured BMI, self-rated health (one, excellent,
to five, poor), the PHQ-9 sum (all nine items required), and daily sedentary minutes. Its
$4{,}155$ eligible records are split $60/40$ within the eight recorded-sex-by-age cells
(ages 20--34, 35--49, 50--64 and 65+), with split seed $20260917$. Source missing-value codes
are excluded, not imputed. The generation bounds are respectively $[0,5]$, $[1,150]$,
$[1,5]$, $[0,27]$ and $[0,1439]$; the BMI and sedentary limits are operational bounds,
not inferred biological ranges. Raw reference values are not clipped.

ACS uses the 2022 one-year Massachusetts release: civilian-employed adults aged at least
eighteen (employment codes one or two), complete on total personal income, wage/salary income,
usual weekly hours and weeks worked in the past twelve months. Negative income, zero wages
and the released top/bottom coding are retained; both monetary variables are multiplied by
the official $1.042311$ adjustment into 2022 dollars without clipping. Their generation bounds
are $[-20{,}844.135378,4{,}388{,}124.098445]$ and $[0,1{,}042{,}309.957689]$;
hours and weeks use $[1,99]$ and $[1,52]$, with $99$ hours denoting $99$ or more. These broad
release-code bounds, not observed extrema, also define marginal-error normalization. The eight cells cross
recorded sex with ages 18--29, 30--44, 45--59 and 60+. All three studies use equal group mass
with unweighted people within groups, not nationally representative survey-weighted estimates.
NHANES and ACS retain otherwise eligible profiles with missing non-cell demographics.
All three studies permit fractional generated values on every numeric target; the structured
grammar controls shape, keys and IDs, while strict post-parsing enforces the numeric bounds.

\paragraph{Source separation and conditioning.}
ACS assigns $60\%$ of eligible \texttt{SERIALNO} units to the source and the remainder to the
human reference by a fixed seeded hash, keeping all people from a housing unit together.
It samples at most one target per unit across all cells and excludes every target unit from
the example pool; consequently transfer fitting and evaluation targets are also
household-disjoint. For group quarters the released serial identifies a person, not a facility,
so facility separation is not claimed. The reference retains all eligible people in its units,
without the one-target-per-unit constraint; its within-cell person distribution need not match
target inclusion probabilities. NHANES and ESS use respondent-disjoint source/reference splits.
Every study fixes three complete real source examples per fifty-profile group-pure block,
excluding all sampled targets (their entire housing units in ACS). Target numeric outcomes
and human-reference records never enter prompts; missing categorical profile values are
rendered explicitly. The reference contains separate human respondents. It evaluates the population of answers,
rather than matching each generated person's answers to their observed human responses.

\begin{table}[htbp]
\centering
\caption{Request-size comparisons under constrained output formats. Entries are matched-ID pair MAE means over three
generation seeds. $\Delta_{10-1}$ is size ten minus size one. A size without a registered arm is
shown as unavailable. The validity range is the fraction of requested size-ten respondents
retained; failed arms remain included.}
\label{tab:structured_frontier}
\scriptsize
\setlength{\tabcolsep}{3pt}
\input{generated/structured_frontier}
\end{table}

\begin{figure}[htbp]
\centering
\includegraphics[width=0.98\textwidth]{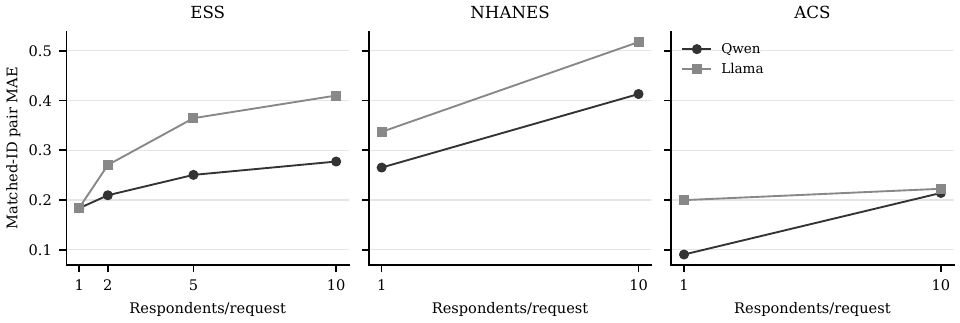}
\caption{Request-size comparisons under constrained output formats. Lines join the matched-ID pair MAE means at
the registered sizes within each domain and model. ESS has sizes $1,2,5,10$; NHANES and ACS have
sizes $1,10$. The ACS size-ten arms have six quality failures in total, retained in the plotted
means.}
\label{fig:structured_frontier}
\end{figure}

Every planned model--seed--size condition is complete. ESS seed-level error increases are
$\RelFrontierSeedsESSQwen\%$ for Qwen and $\RelFrontierSeedsESSLlama\%$ for Llama.
Section~\ref{sec:control} reports the mean changes and mixed Llama/ACS result;
Table~\ref{tab:new_domain_quality} retains all quality failures.

\begin{table}[htbp]
\centering
\caption{Quality flags for the structured request-size comparisons. The gate is at least $99\%$ retained
respondents and zero truncations for every model--seed--size arm.}
\label{tab:new_domain_quality}
\scriptsize
\setlength{\tabcolsep}{4pt}
\input{generated/new_domain_quality}
\end{table}

\subsection{Fitting corrections without human evaluation answers}

The original six variants and their fitting choices were fixed before inspection of
correction effects. Human-reference values were used only for scoring. The later single-parameter,
reference-size and transformation checks in Appendix~\ref{app:correction} are exploratory;
Appendix~\ref{app:review_diagnostics} documents them separately.

The request-size and correction studies retain profiles in different ways. The
request-size comparison in Table~\ref{tab:structured_frontier} starts with the full
planned cohort. A profile is included only if its answers are valid at every registered
size for that model and seed ($1,2,5,10$ for ESS; $1,10$ otherwise). The correction
study first separates the preassigned fitting and evaluation halves, then retains profiles
with valid answers at both size one and size ten within each half. Across the six model--seed
arms, matched request-size counts range from $1{,}974$ to $2{,}000$ for ESS, $798$ to $800$ for
NHANES and $748$ to $782$ for ACS; matched correction-evaluation counts are $988$--$1{,}000$,
$399$--$400$ and $359$--$393$, respectively. The request-size and correction studies therefore have different accepted cohorts and
different uncorrected-batch baselines. A secondary correction analysis uses every valid
ten-person evaluation row, with the same IDs across transformed variants. Matching aligns
accepted profiles but neither recovers invalid answers nor removes their quality flags.

\paragraph{Finite-sample rank conventions.}
For sorted fitting values $a_1,\ldots,a_n$ and an input $x$, let $L$ and $R$
be the numbers of fitting values strictly below and at most $x$, respectively.
The probability-integral transform is
$u=(L+U(R-L)+1/2)/(n+1)$, so $0<u<1$, including outside the fitting range.
Here $U$ is a deterministic ID-keyed uniform, obtained from the SHA-256 hash of
the fixed version, transform-domain label, variable and respondent ID; the
implementation uses the midpoint of a 52-bit bin. Separate fixed domain labels
are used for batched input ranks, transformed-score ranks and one-person fitting
ranks. The inverse empirical CDF returns $a_{1+\min(\lfloor nu\rfloor,n-1)}$.
In words, the transform first locates an answer among the fitting values. If several
fitting answers have that value, the ID-keyed draw assigns it a position within the tie.
The probability stays strictly between zero and one, so converting it to a normal score
cannot produce an infinite value. These conventions apply at both empirical-CDF steps. The batch-level marginal-preserving control instead assigns
sorted evaluation values directly, with respondent ID breaking tied scores.

Tables~\ref{tab:tradeoff_main}--\ref{tab:tradeoff_marginal_variants} report all six original
variants on the same matched evaluation profiles: uncorrected batching, marginal adjustment
alone, the full correction, within-group resampling of whole one-person fitting rows,
an identity dependence target, and held-out one-person outputs. Resampling uses group
membership but ignores other profile attributes. None of these methods guarantees agreement
with every individual profile.

\begin{table}[htbp]
\centering
\caption{Correlation and individual-variable accuracy after correction. $A$ is correlation
MAE on matched held-out profiles and $M$ is normalized marginal Wasserstein-1 error;
lower is better. Entries average three generation
seeds. Deltas are learned dependence minus untouched batch. The ACS dagger marks six size-ten
quality-failed model--seed arms, which remain included.}
\label{tab:tradeoff_main}
\scriptsize
\setlength{\tabcolsep}{3pt}
\input{generated/tradeoff_main}
\end{table}

\begin{table}[htbp]
\centering
\caption{Association-error variants. Entries are matched-evaluation means over three generation
seeds. The full correction does not always have the lowest error. The table includes every
comparison specified in the original protocol.}
\label{tab:tradeoff_pair_variants}
\scriptsize
\setlength{\tabcolsep}{2.5pt}
\input{generated/tradeoff_pair_variants}
\end{table}

\begin{table}[htbp]
\centering
\caption{Marginal-error variants. Entries are matched-evaluation means of normalized
Wasserstein-1 error over three generation seeds. The same output cohorts and locked variants as
Table~\ref{tab:tradeoff_pair_variants} are used.}
\label{tab:tradeoff_marginal_variants}
\scriptsize
\setlength{\tabcolsep}{2.5pt}
\input{generated/tradeoff_marginal_variants}
\end{table}

\paragraph{Prespecified ACS scoring sensitivities.}
The original six-variant ACS transfer also omits the component--total income/wages
pair and, separately, applies $\operatorname{asinh}(y/10{,}000)$ to income and wages
before recentering within groups. These checks change scoring while keeping the fitted corrections fixed.
All failed-quality ACS arms remain included. Table~\ref{tab:acs_sensitivities}
reports the three-seed means; these prespecified checks concern the original
transfer variants, not the subsequently designed marginal-preserving controls.

\begin{table}[htbp]
\centering
\caption{Prespecified ACS correlation-error sensitivities for all six original variants.
Lower is better. All six size-ten quality failures are retained.}
\label{tab:acs_sensitivities}
\scriptsize
\setlength{\tabcolsep}{3pt}
\input{generated/acs_prespecified_sensitivities}
\end{table}

\subsection{Quality, provenance, and verification}

Independent implementations reproduce every planned correction variant without endpoint discrepancies.
ESS verifies $21{,}600$ raw requests and $47{,}933$ accepted rows; NHANES and ACS each verify
$5{,}280$ requests, with $9{,}597$ and $9{,}413$ accepted rows. Invalid responses are never
redrawn or clipped, and quality failures remain reported. The result builder validates
completion, verification, variant grids and matched scopes against SHA-256-bound aggregates.
Public replay uses only aggregate reports, not profiles, prompts, fitted maps or individual outputs.

%% file: sections/app_marginal_controls.tex
\section{Exploratory corrections that preserve marginal distributions}\label{app:marginal_controls}

The design of these 18 September controls followed inspection of the original correction
results (Section~\ref{sec:correction}). They reuse the verified parameters, fitting
distributions, cohorts and human references. The source and protocol were fixed before
scoring, and transformed rows were saved before the references were reopened. The analysis requires no new generation
or human fitting answers, but the reused reference remains exploratory rather than a
preregistered test or new holdout.

The two variants differ in where their final answer values come from. The fit-batch
variant keeps the fitted dependence transformation, but takes final values from the
batched fitting distribution instead of the one-person fitting distribution. It can
transform each new row separately. Because the fitting and evaluation distributions
may differ, however, the evaluation marginals need not be preserved exactly.

The exact-batch variant takes its final values from the evaluation batch itself. Within
each group and column, it sorts the original values and assigns them to respondents in
the order of their corrected scores (Section~\ref{sec:correction}). The same values
therefore remain in the batch, although different respondents may receive them.

For exact preservation, reordering takes place within each group across requests,
rather than within each ten-person request. Each reported sample undergoes reordering
separately, so matching cannot break preservation. Reordering can still move values between
profiles, and exact preservation for this batch provides no guarantee for future rows.

\begin{table}[htbp]
\centering
\caption{Original and first exploratory controls: correlation MAE $A$ and range-normalized
marginal Wasserstein error $M$. Means use the same matched evaluation cohorts and three seeds.
Bold denotes minima within these eight original methods, including ties, not all later scalar
controls. Exact-batch preserves $M$. $\dagger$: six ACS size-ten quality failures retained.
SD-normalized ACS results appear in Table~\ref{tab:review_metrics_acs}.}
\label{tab:marginal_controls}
\footnotesize\setlength{\tabcolsep}{3pt}
\input{generated/correction_all_methods}
\end{table}

Figure~\ref{fig:tradeoff} in the main text shows the original correction's trade-off.

\begin{figure}[htbp]
\centering
\includegraphics[width=0.98\textwidth]{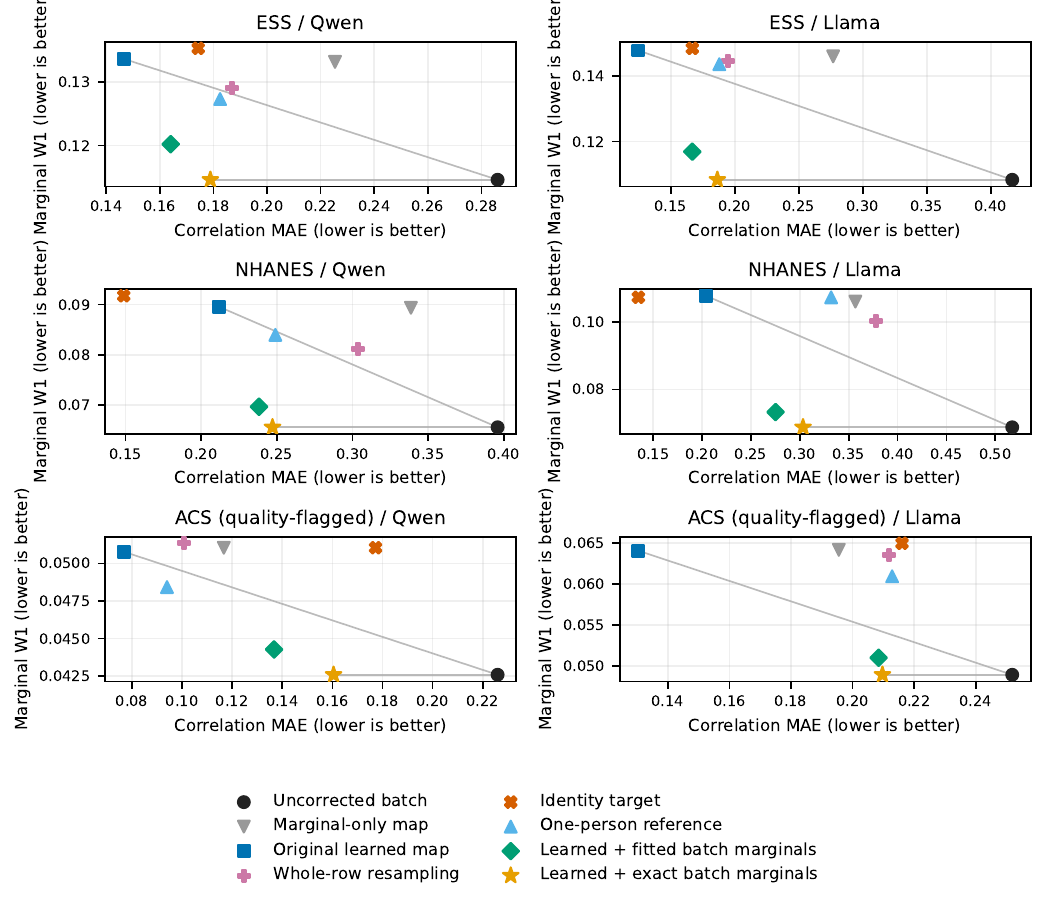}
\caption{All eight methods on the same matched evaluation cohorts, averaged across three
generation seeds. Lower is better on both axes. The exact-preservation control has the raw
batch's marginal error by construction; its correlation improvement is measured. ACS panels
retain all six failed size-ten quality arms. Means do not imply improvement in every seed.}
\label{fig:marginal_controls}
\end{figure}

\begin{table}[htbp]
\centering
\caption{Seed-level correlation-error changes relative to the same uncorrected matched batch.
Negative values improve accuracy. Exact preservation improves $17/18$ contrasts;
ACS/Llama seed 2 worsens by approximately $9\%$. $\dagger$: original ACS quality failures,
retained without regeneration.}
\label{tab:correction_seed_contrasts}
\scriptsize
\input{generated/correction_seed_contrasts}
\end{table}

All $270$ endpoints cover eight variants on matched IDs and seven on all retained batch IDs
(no unmatched one-person anchor). An independent verifier replays $42{,}608$ new transformed
rows, checks $5{,}424$ group-column marginal invariants and reproduces primary endpoints within
$3\times10^{-16}$. These checks verify the computations and add no new experimental replications. Original learned
outputs replay exactly under NumPy $2.4.6$, pandas $3.0.3$ and SciPy $1.17.1$.

Aggregate sensitivities report equal-group means of within-group Pearson and Spearman errors,
and Wasserstein distance divided by each group's human-reference SD. Correlations involving constant columns are marked undefined, with coverage counts.
Pooled-residual Pearson error remains the primary measure. Three-seed means and ranges
describe the observed runs; they are not confidence intervals. These sensitivities do
not select a variant or hyperparameter.

%% file: sections/app_followup_checks.tex
\section{Individual-variable distributions and agreement with profiles}\label{app:followup_checks}

These 19 September checks examine individual-variable accuracy and relationships between
profile attributes and answers. They reuse existing answers and human references. The
protocols, code and input hashes were fixed before computation of the new measures,
after inspection of the original results. No fitting or generation is repeated, and the reused reference
keeps the analysis exploratory.

\subsection{Marginal accuracy in the descriptive audit}\label{app:direct_marginals}

The direct audit uses the harmonized ESS/WVS and archival NHANES, ACS and MEPS panels,
not the ESS-only controlled experiment. For each variable pair, it retains finite
observations and matches cells with enough data in the human reference and every model:
at least twenty people per ESS/WVS country and fifteen per demographic cell elsewhere.
Shared cells receive equal weight. The audit calculates pooled within-cell residual
correlations and averages the two variables' within-cell Wasserstein distances. Each distance is normalized by the variable's human pooled
within-cell population standard deviation.
This uses pair-specific support and weights, unlike the original descriptive panel;
it measures within-cell marginals, not globally pooled distributions or range-normalized $M$.

\begin{table}[htbp]
\centering
\caption{Direct marginal audit on common pair-specific cells and equal cell weights.
All $738$ model--pair endpoints are supported. Entries average over pairs; cell ranges reflect
missingness. Across thirteen domain--model arms, mean marginal errors span $0.275$--$0.720$
human SDs. No equivalence threshold or test is specified; sampling error contributes.
This is not a fully crossed four-model by four-domain experiment.}
\label{tab:direct_marginals}
\scriptsize
\input{generated/direct_marginal_audit}
\end{table}

\subsection{Agreement between demographic attributes and answers}\label{app:profile_checks}

Source files verified by hash supply the categorical prompt fields for the original
human references. Group labels and all numeric values match exactly ($19{,}896$ ESS, $1{,}666$
NHANES and $15{,}085$ ACS records). The reference respondents differ from the target-profile individuals. The analysis therefore compares
distributions conditional on profile attributes, rather than each person's generated and
observed answers.

All eight variants use matched IDs across two models and three seeds. The checks test
each supplied categorical attribute separately within the original groups, excluding fields
that define those groups. Eligibility uses metadata alone: twenty human and five generated
observations in every model--seed cohort, with two eligible categories per attribute and group. This leaves nine ESS
attributes (no invented age band) and three each for NHANES and ACS; rarer combinations are
not evaluated (Table~\ref{tab:profile_coverage}).

The profile-contrast measure asks how answers differ across categories of a demographic
attribute within a group. It first compares a category's mean answer with the mean for
its whole group, then asks whether the synthetic data reproduce that difference.
For outcome $j$, category $a$ of profile field $f$, and group $c$, define
$d^P_{cfaj}=\bar y^P_{cfaj}-\bar y^P_{cj}$. The profile-contrast error is
$|d^{\mathrm{synth}}_{cfaj}-d^{\mathrm{real}}_{cfaj}|/s^{\mathrm{real}}_{W,j}$,
where the denominator is the human equal-group pooled-within-group population SD.
Absolute conditional mean error and conditional Wasserstein distance use the same scale.
Each measure averages equally over eligible category/group cells, then outcomes, then attributes.
Groups with more supported categories receive more endpoint weight; only the SD gives
groups equal mass. Attributes overlap, and coverage is limited: citizenship covers $3/20$
ESS countries and race $4/8$ ACS groups.

Original and exact-preservation corrections improve $1{,}068$ and $893$ of $1{,}188$
model--seed--attribute--outcome contrasts, respectively. These overlapping comparisons are descriptive counts rather than independent
significance tests. They do not establish full profile consistency or accuracy for individuals
and small subgroups.

\begin{table}[htbp]
\centering
\caption{Profile-contrast error normalized by human pooled within-group SD, averaged over
three seeds. Relative to untouched batching, the original and exact-preservation maps reduce
domain--model means by approximately $4$--$39\%$ and $1$--$10\%$, respectively; neither is
uniformly best. All methods use the same metadata-selected support. Lower is better.
$\dagger$: existing ACS quality failures retained.}
\label{tab:profile_contrasts}
\scriptsize
\input{generated/profile_contrasts}
\end{table}

\begin{table}[htbp]
\centering
\caption{Conditional marginal Wasserstein distance within supported profile categories,
normalized by the same human pooled within-group SD. The original map raises every
domain--model mean; exact preservation lowers every mean. The latter is measured, not
guaranteed by preserving group marginals. Lower is better.}
\label{tab:profile_wasserstein}
\scriptsize
\input{generated/profile_wasserstein}
\end{table}

\begin{table}[htbp]
\centering
\caption{Absolute conditional mean error on the same profile support and human SD scale.}
\label{tab:profile_means}
\scriptsize
\input{generated/profile_means}
\end{table}

\begin{table}[htbp]
\centering
\caption{Metadata-selected profile support, common across all models, seeds and variants.
Minimum generated counts refer to one model--seed evaluation cohort, not pooled draws.}
\label{tab:profile_coverage}
\scriptsize
\input{generated/profile_coverage}
\end{table}

Both extensions verify Wasserstein distances by independently integrating CDF differences;
the direct audit also checks covariance with explicit equal-group row weights. Public outputs
contain aggregates, coverage and provenance hashes, not individual profiles or answers.

%% file: sections/app_prefix_intervention.tex
\section{Changing earlier answers in a replayed completion}\label{app:prefix_intervention}

This supporting Qwen3.8-27B/NHANES experiment tests whether earlier answers influence
BMI--sedentary-minute correlations in later responses. It changes those answers while
holding the target profile and token position fixed. The design and analysis were fixed
after a technical pilot and before scientific generation. The endpoint measures generated
relationships, without evaluating accuracy against human data.

\paragraph{Source-prefix replay.}
The supplied histories are the literal first nine answer objects from forty valid
ten-person requests at seed zero: five donor requests per demographic cell from the correction's fitting partition. A new same-cell evaluation profile
replaces only the tenth requested profile; the first nine profiles, three real examples,
instructions and five-item battery remain fixed. The original assistant preamble and nine-row
prefix are supplied as native token IDs, without another role or instruction. Independent
unfinished-assistant rendering matches these IDs exactly, including the non-thinking preamble
and trailing whitespace. A fixed-ID suffix grammar generates only the tenth answer and closing
bracket. Non-seed decoding settings are unchanged; requests receive distinct decoder seeds.

The experiment compares three versions of the supplied nine answers. The \emph{natural} condition
uses them exactly as generated. The \emph{permuted} condition moves sedentary-minute
values between respondents according to one hash-fixed permutation. The \emph{aligned}
condition instead assigns larger sedentary values to respondents with higher BMI, breaking
ties by ID. Only the assignment of sedentary values changes. The substitution preserves the original
numeric spellings, every variable's list of values, all other fields, row order, whitespace,
prompt and total context-token count. All forty
donors pass prespecified eligibility checks: nonconstant focal columns and strictly higher
prefix correlation under alignment than permutation. The permutation is fixed without searching alternatives or inspecting generated target answers. Input-prefix correlations average
$\PrefixSourceNatural$, $\PrefixSourcePermuted$ and $\PrefixSourceAligned$, respectively;
these are manipulation checks, not target outcomes.

The original prefixes were generated with a different tenth profile. Replaying them therefore
differs from generating a fresh batch or reproducing its original random sequence. Re-pairing
also changes agreement with donor profiles and relationships beyond the focal pair. The
experiment identifies the effect of this content change, without isolating covariance as
the channel through which it acts.

\paragraph{Allocation and estimator.}
The technical pilot contains twelve targets, three conditions and two repetitions. All
$72$ answers pass validation without truncation. Pilot effect sizes do not select the design. A frozen hash order then selects
$25$ non-pilot evaluation profiles per cell, five per donor prefix. Crossing $200$ targets,
three conditions and three repetitions gives $1{,}800$ new answers. Injected rows never enter
the endpoint. Raw outputs, native token IDs and commitments are retained; an independent check
decodes tokens and parses both full arrays and target suffixes. Completed invalid outputs are
not rerun. The threshold is $99\%$ validity and no truncation in each condition.

The primary comparison uses target--repetition cases accepted in all three conditions.
Within each cell, pooled targets and repetitions provide the observations for calculating
Pearson correlations. The endpoint then averages the aligned-minus-permuted difference
equally over eight cells. This calculation
averages correlations rather than the covariances in Equation~\ref{eq:within}, and remains
conditional on joint acceptance.

The $5{,}000$ paired, cell-stratified bootstrap draws resample five donor blocks per cell,
preserving each donor's complete matched targets, repetitions and conditions with multiplicity;
individuals are not resampled again. The frozen rule reports a percentile interval only if the
primary correlations and at least $99\%$ of draws are defined. With five donors per cell,
nominal $95\%$ coverage is not established: $600$ responses per arm are not $600$ independent units.

\paragraph{Results and scope.}
All $\PrefixValidResponses/1{,}800$ answers are valid, with $\PrefixTruncations$ truncations and
$\PrefixMatchedTriplets$ matched triplets; matched and all-retained results coincide.
The contrast is $\PrefixDelta$, with donor-block interval
$[\PrefixIntervalLow,\PrefixIntervalHigh]$ and $\PrefixBootstrapValid/5{,}000$ estimable draws.
Repetition contrasts are $\PrefixRepZero$, $\PrefixRepOne$ and $\PrefixRepTwo$; all $24$
cell--repetition contrasts are positive. Centering outcomes within donor blocks gives
$\PrefixDonorCentered$, so between-donor means alone do not explain the contrast.

\begin{table}[h]
\centering
\caption{BMI--sedentary correlations among newly generated tenth respondents only;
$\Delta$ is aligned minus permuted history. Injected records are excluded.}
\label{tab:prefix_intervention}
\small
\input{generated/prefix_intervention}
\end{table}

Changing the earlier answers changes later relationships, but does not establish greater
accuracy or show how much of the batching penalty this mechanism explains. Keeping the
same values in the prefix also does not guarantee agreement with profiles or preserve
the distribution of new target answers. Alignment raises mean sedentary time by $9.446$
minutes relative to permutation; mean Wasserstein distance between conditions is $32.963$
minutes across cells. These measure differences between generated outputs, not errors
against human data. The supplement also reports contrasts with natural history, all-retained
results and marginal changes for all five outcomes. These are descriptive checks without
multiplicity adjustment.